\documentclass{article}
\usepackage{iclr2027_conference,times}

\usepackage{amsmath,amsfonts,bm}

\def\eqref#1{equation~\ref{#1}}

\def\1{\bm{1}}

\DeclareMathAlphabet{\mathsfit}{\encodingdefault}{\sfdefault}{m}{sl}
\SetMathAlphabet{\mathsfit}{bold}{\encodingdefault}{\sfdefault}{bx}{n}

\usepackage{amsmath,amssymb,amsthm,mathtools}
\usepackage{graphicx}
\usepackage{caption}
\usepackage{subcaption}
\usepackage{booktabs}
\usepackage{multirow}
\usepackage{tabularx}
\usepackage{makecell}
\usepackage{adjustbox}
\usepackage{microtype}
\usepackage{xcolor}
\usepackage{enumitem}
\usepackage{xspace}
\usepackage{fontawesome5}
\usepackage{algorithm}
\usepackage[most]{tcolorbox}
\usepackage{float}
\usepackage[section]{placeins}
\usepackage{algpseudocode}
\usepackage{hyperref}
\usepackage{url}
\usepackage{siunitx}
\usepackage{fontawesome5} 

\usepackage{colortbl}

\definecolor{decafblue}{RGB}{232,242,251}
\definecolor{decafbluehead}{RGB}{218,234,247}
\definecolor{accessgreen}{RGB}{236,247,239}
\definecolor{refgray}{RGB}{244,244,244}
\definecolor{targetorange}{RGB}{253,244,226}
\definecolor{gainpos}{RGB}{235,247,238}
\definecolor{gainneg}{RGB}{251,237,237}
\definecolor{darkblue}{rgb}{0.0, 0.0, 0.55}

\hypersetup{
    colorlinks=true,
    linkcolor=darkblue,
    citecolor=darkblue,
    urlcolor=darkblue,
}

\usepackage{appendix}
\let\appendixpagenameorig\appendixpagename
\renewcommand{\appendixpagename}{\normalfont\LARGE\scshape\appendixpagenameorig}
\usepackage{titletoc}
\usepackage{tocloft}
\usepackage{titlesec}

\definecolor{algBorder}{HTML}{D1D5DB}  
\definecolor{algBG}{HTML}{F9FAFB}      
\definecolor{algKey}{HTML}{1E40AF}      
\definecolor{algHl}{HTML}{E0F2FE}      
\definecolor{algLineNum}{HTML}{9CA3AF} 

\definecolor{decafTeal}{HTML}{1B7F80}
\definecolor{decafGold}{HTML}{B27612}
\definecolor{decafBlue}{HTML}{1769AA}
\definecolor{decafRed}{HTML}{D24A3A}
\definecolor{decafPurple}{HTML}{7B4FA3}

\algrenewcommand\algorithmicrequire{\textbf{\color{algKey}Require:}}
\algrenewcommand\algorithmicforall{\textbf{\color{algKey}for all}}
\algrenewcommand\algorithmicdo{\textbf{\color{algKey}do}}
\algrenewcommand\algorithmicend{\textbf{\color{algKey}end}}
\algrenewcommand\algorithmicreturn{\textbf{\color{algKey}return}}


\newcommand{\decaf}{\textsc{DECAF}\xspace}
\newcommand{\Abs}{\mathrm{Abs}}

\newtheorem{proposition}{Proposition}
\newtheorem{theorem}{Theorem}
\newtheorem{corollary}{Corollary}
\theoremstyle{remark}

\title{DECAF~{\large \texorpdfstring{\faCoffee}{[Coffee]}}: \\
\textbf{D}ecomposition of \textbf{E}vidence, \textbf{C}ontradiction, \textbf{a}nd \textbf{F}ragility in Perturbation Responses}

\author{Lei You \\ Technical University of Denmark \\ \textit{leiyo@dtu.dk}}

\iclrfinalcopy
\begin{document}
\maketitle

\begin{abstract}
Perturbation methods explain model decisions by measuring prediction changes
under altered inputs, but response magnitude tells us only how much a model
reacts, not what that reaction means. The same magnitude can support the final
factual--counterfactual difference, oppose it, or arise strongly along the
perturbation path yet vanish at the endpoint. We therefore track how the
contrast develops as paired inputs are progressively revealed, using the final
contrast to interpret the trajectory.
We introduce \emph{\decaf
(\textsc{D}ecomposition of \textsc{E}vidence,
\textsc{C}ontradiction, \textsc{A}nd \textsc{F}ragility)}, which routes aligned,
opposed, and endpoint-null responses into evidence $E$, contradiction $C$, and
fragility $F$. The decomposition preserves ordinary magnitude exactly,
$\Abs=E+C+F$, and is unique under endpoint-relative axioms.
Across controlled vision and tabular settings, the three components track
independently measured behavior. In a 72-model ImageNet-9 audit, we compare
cases with nearly identical response magnitude but different independently
measured behaviors. The largest \decaf component agrees with an observed
behavior in 96.4\% of cases, compared with 35.0\% for magnitude alone.
Changing only the reveal path increases total response by nearly 80\%, yet
evidence barely changes while fragility grows by more than $4\times$.
On FunnyBirds and ImageNet-1k, short forward-only \decaf trajectories outperform
the tested general-purpose attribution baselines. On a 1B-scale DINOv2 model,
a short trajectory matches a strong gradient-based baseline with $4.75\times$
lower wall time and $2.36\times$ lower peak memory. \\[2mm]
\faGithub~\textbf{Code}: {\small \url{https://github.com/youlei202/decaf}}

\end{abstract}

\section{Introduction}
\label{sec:intro}

Post-hoc explanation asks what aspects of an input matter to a model’s prediction. A natural way to answer this question is contrastively: compare the prediction with what would happen if a feature, region, concept, or context were changed. Perturbation-based and counterfactual methods instantiate this comparison by constructing alternative inputs and measuring the resulting change in model output~\citep{fong2017meaningful,dhurandhar2018explanations,goyal2019counterfactual,covert2021explaining}. This has the basic form of an effect measurement: change a factor, compare two responses, and summarize the difference. In practice, that response is often reduced to a scalar magnitude.

A magnitude answers an important question: \emph{how much did the model react?} It leaves a second question open: \emph{what did the reaction mean?} Consider an input and a paired counterfactual that changes one factor. Their prediction difference measures how strongly the model responds to that change. This endpoint comparison, however, shows only the net effect after inputs are fully observed. It does not show how the contrast emerges as information becomes available. We therefore consider a paired reveal: the paired inputs are progressively revealed along matched trajectories, so that their prediction difference can be observed at the same level of revealed information.

As the pair is revealed from an uninformative state, the contrast can evolve in different ways. It may align with the final effect, oppose it, or become large even when the fully observed pair differs little. These distinct behaviors can have identical absolute responses. Treating these responses as equivalent therefore hides behavior that matters for interpretation.

\begin{figure*}[t]
\centering
\begin{minipage}[t]{0.6\linewidth}
\begin{figure}[H]
  \vspace{0pt}
  \centering
  \includegraphics[width=\linewidth]{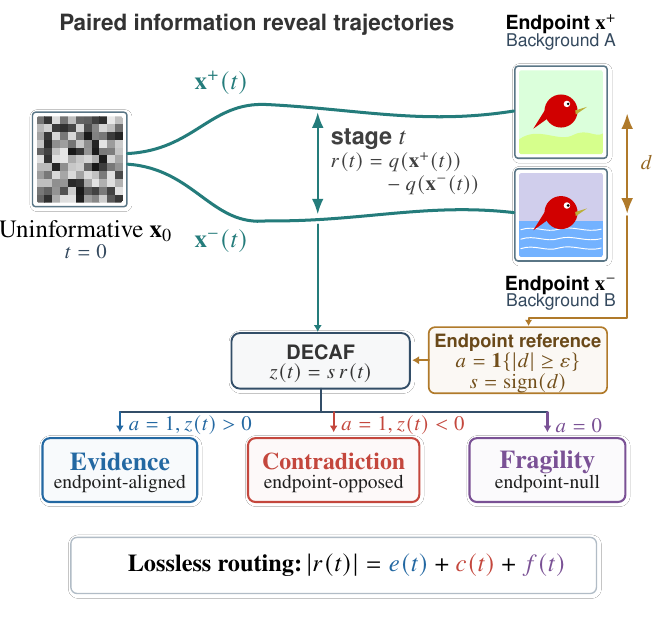}
  \caption{\textbf{\decaf at a glance.} The paired reveal measures a signed stage response, while the clean endpoint supplies an activity gate and orientation. \decaf routes the same response into endpoint-aligned evidence, endpoint-opposed contradiction, or endpoint-null fragility, with exact conservation. The compact summary reports the two headline consequences: mechanism recovery after ordinary magnitude is matched, and substantially more stable cross-protocol diagnostics.}
\label{fig:decaf-overview}
\end{figure}
\end{minipage}%
\hfill
\begin{minipage}[t]{0.38\linewidth}
We introduce \decaf in \figurename~\ref{fig:decaf-overview}, short for \emph{\textsc{D}ecomposition of \textsc{E}vidence, \textsc{C}ontradiction, \textsc{A}nd \textsc{F}ragility}. \decaf resolves this ambiguity by using the final response between the fully observed pair as a semantic reference for the entire reveal process. Its magnitude determines whether the factor has a meaningful final effect, while its sign provides the reference direction. Intermediate responses that agree with this direction are routed to \emph{evidence}, while responses that oppose it are routed to \emph{contradiction}. If the fully observed pair shows little response, there is no meaningful final effect to align with, and intermediate responses are routed to \emph{fragility}. We call this \emph{semantic routing}: the observed responses are assigned different roles according to their relation to the final contrast, without changing the trajectories themselves. The routing is lossless, so evidence, contradiction, and fragility sum exactly to the ordinary response magnitude.
  \textbf{\textbf{\decaf} is model-agnostic}, therefore explaining any black-box model that returns a scalar score.
\end{minipage}

\end{figure*}

\textbf{Relation to prior work.}
Gradient and path methods assign contributions to input coordinates or internal units~\citep{simonyan2014deep,selvaraju2017gradcam,sundararajan2017axiomatic,shrikumar2017deeplift,bach2015lrp}. Removal and sampling methods measure prediction changes under masking or replacement~\citep{fong2017meaningful,fong2019extremal,petsiuk2018rise,covert2021removing}. Counterfactual and causal approaches define meaningful contrasts~\citep{goyal2019counterfactual,chattopadhyay2019causal,janzing2020causal}. Evaluation work shows that baselines, removal operators, and off-manifold inputs can alter explanation behavior~\citep{adebayo2018sanity,hooker2019benchmark,rong2022consistent,jethani2023labelleakage}. \decaf begins after a paired response has been measured. It refines the response by its relation to the clean endpoint. Unlike positive--negative feature attribution, it also contains an endpoint-null branch. Unlike amortized explainers such as FastSHAP, it requires no learned explanation model~\citep{jethani2022fastshap}. A detailed comparison with feature attribution, counterfactual explanation, evaluation frameworks, and efficient black-box methods appears in Appendix~\ref{app:related}.

\textbf{We make five contributions}:
\begin{itemize}[leftmargin=1.4em,itemsep=1pt,topsep=0pt]
    \item \emph{Problem.}
    We identify a semantic gap in perturbation explanation: response magnitude tells us how strongly a model reacts, but not what that reaction means.

    \item \emph{Observation interface.}
    We introduce paired reveal, which tracks a factual--counterfactual contrast along matched trajectories as information becomes available.

    \item \emph{Algorithm.}
    We introduce \decaf, which uses the final contrast to route intermediate responses into evidence, contradiction, and fragility.
    The decomposition is lossless and unique under simple axioms, and adds no model queries beyond the paired trajectories.

    \item \emph{Behavioral meaning.}
    Controlled vision and tabular experiments show that the three components track learned reliance, effect reversal, and fragility across neural, linear, and tree-based models.

    \item \emph{Practical consequence.}
    On ImageNet-9, \decaf recovers response semantics after magnitude is matched and diagnoses what changes across perturbation paths.
    It also outperforms the tested general-purpose attribution baselines on FunnyBirds and ImageNet-1k and scales efficiently to a large DINOv2 vision transformer.

\end{itemize}
\section{Paired Information Revealing: Formal Setup}
\label{sec:setup}

We formalize paired information revealing. Let
$q:\mathcal{X}\rightarrow\mathbb{R}$ denote the model score.
An input $\mathbf{x}^{+}$ and a factor-level counterfactual
$\mathbf{x}^{-}$ define the final contrast
\[
d=q(\mathbf{x}^{+})-q(\mathbf{x}^{-}).
\]
The pair need not be pre-specified: in feature attribution,
$\mathbf{x}^{-}$ can be induced from $\mathbf{x}^{+}$ by a
removal or replacement operator.

A paired reveal starts from a common uninformative state $\mathbf{x}_0$
and produces matched trajectories
$(\mathbf{x}^{+}(t),\mathbf{x}^{-}(t))$ for $t\in \mathcal{T}=[0,1]$, with
\[
\mathbf{x}^{+}(0)=\mathbf{x}^{-}(0)=\mathbf{x}_0,\qquad
\mathbf{x}^{+}(1)=\mathbf{x}^{+},\qquad
\mathbf{x}^{-}(1)=\mathbf{x}^{-}.
\]

The signed response is
$r(t)=q(\mathbf{x}^{+}(t))-q(\mathbf{x}^{-}(t))$, with $d=r(1)$.
Ordinary magnitude $|r(t)|$ preserves size but discards its
relation to the final contrast; Section~\ref{sec:method} uses it
to assign semantic roles.
\section{DECAF: A Semantic Decomposition of Paired Responses}
\label{sec:method}

Section~\ref{sec:setup} gives us two quantities: the response $r(t)$ at each
reveal level and the final contrast $d=r(1)$. \decaf uses the final contrast
only as a semantic reference for interpreting the intermediate responses.
If the final contrast is substantial, its direction tells us which responses
agree with the model's eventual behavior and which oppose it. If the final
contrast is negligible, intermediate responses cannot be interpreted as
supporting or opposing a meaningful final effect.

Choose a practical threshold $\epsilon>0$ and define
\[
a=\mathbf{1}\{|d|\ge \epsilon\},
\qquad
s=\operatorname{sign}(d).
\]
The gate $a$ determines whether the final contrast is active, while $s$
orients the trajectory when it is active. Let
\[
z(t)=s\,r(t),
\]
and write $z^+=\max\{z,0\}$ and $z^-=\max\{-z,0\}$.
DECAF routes the response as
\[
(e(t),c(t),f(t))
=
\bigl(a z^+(t),\,a z^-(t),\,(1-a)|r(t)|\bigr).
\]

An active response aligned with the final contrast becomes
\emph{evidence}; an active response pointing in the opposite direction
becomes \emph{contradiction}; and a response on a final-contrast-null pair
becomes \emph{fragility}. This logic is illustrated earlier in \figurename~\ref{fig:decaf-overview}.

Let $\mu$ be a normalized measure over stages and let the outer expectation cover paired examples and protocol randomness. We report
\begin{equation}
    M=\mathbb{E}|d|,
    \qquad
    (E,C,F)=\mathbb{E}\!\int_{\mathcal{T}}(e(t),c(t),f(t))\,\mathrm{d}\mu(t),
    \qquad
    \Abs=E+C+F.
    \label{eq:population-profile}
\end{equation}
The finite-grid implementation replaces the integral by quadrature. No additional model query is needed once the signed paired trajectory is available. See Appendices~\ref{app:theory} and~\ref{app:estimators} for finite-grid estimation, conditional summaries, threshold sensitivity, and
streaming accumulation.

\textbf{Forward-only implementation.}
The model enters only through evaluations of $q$. \decaf needs no gradients, parameters, or internal activations. It can run against a neural network, a tree ensemble, or a remote score API. Examples, stages, factors, paths, and models are independent batching dimensions.

\begin{figure}[H]
  \centering
  \begin{minipage}[t]{0.43\textwidth}
    \vspace{0pt}
    \centering
    \captionof{table}{Model calls per image in ImageNet-9. }
    \label{tab:complexity}
    \scriptsize
    \setlength{\tabcolsep}{2.5pt}
    \resizebox{\linewidth}{!}{%
\begin{tabular}{lrrrr}
\toprule
Method &
\shortstack{Fwd.} &
\shortstack{Bwd.} &
\shortstack{Internal} &
\shortstack{Learned} \\
\midrule

\rowcolor{decafblue}
\decaf-9
& 18
& \cellcolor{accessgreen}\textbf{0}
& \cellcolor{accessgreen}\textbf{No}
& \cellcolor{accessgreen}\textbf{No} \\

Input$\times$Grad
& 1 & 1 & Yes & No \\

IG
& 16 & 16 & Yes & No \\

SmoothGrad
& 16 & 16 & Yes & No \\

BlurIG
& 12 & 12 & Yes & No \\

Occlusion
& 49 & 0 & No & No \\

RISE
& 256 & 0 & No & No \\

\bottomrule
\end{tabular}
}
  \end{minipage}%
  \hfill
\begin{minipage}[t]{0.55\textwidth}
  \vspace{0pt}
  \centering
  \begin{tcolorbox}[
    enhanced,
    boxrule=0.5pt,
    colframe=algBorder,
    colback=algBG,
    arc=4pt,
    left=2pt, right=2pt, top=2pt, bottom=2pt,
    shadow={0mm}{0.3mm}{0mm}{gray!15} 
  ]
    \captionof{algorithm}{\textbf{DECAF in four steps.}}
    \label{alg:decaf}
    \small
    \begin{algorithmic}[1]
      \Require score $q$, paired path $(\mathbf{x}^{+}(t),\mathbf{x}^{-}(t))$, $\varepsilon$
      \State $d\gets q(\mathbf{x}^{+})-q(\mathbf{x}^{-})$; \;$a\gets\mathbf{1}\{|d|\geq\varepsilon\}$;\;$s\gets\operatorname{sign}(d)$
      \ForAll{$t\in\mathcal{T}$}\Comment{$r(t=1)=d$ is cached and reused}
        \State $r(t)\gets q(\mathbf{x}^{+}(t))-q(\mathbf{x}^{-}(t))$
        \State \rlap{\colorbox{algHl}{\phantom{$(e,c,f)\gets\bigl(a(sr)^{+},a(sr)^{-},(1-a)|r|\bigr)$}}}$(e,c,f)\gets\bigl(a(sr)^{+},a(sr)^{-},(1-a)|r|\bigr)$
      \EndFor
      \State \Return $M,E,C,F$ by averaging over stages and examples
    \end{algorithmic}
  \end{tcolorbox}
\end{minipage}
\end{figure}


\textbf{Scope of the semantics.}
\decaf assigns observable, endpoint-relative roles to paired responses; it does not infer the latent cause of a trajectory response. These response roles are defined relative to the score $q$, the chosen factual--counterfactual pair, the reveal trajectory, and the practical threshold  $\epsilon$. In particular,
\emph{fragility} denotes response on a pair whose final contrast is negligible
under this specification. It does not by itself distinguish off-manifold
artifacts, boundary uncertainty, calibration effects, or other possible causes.
An active pair may still be path-sensitive, but its response is described
through aligned and opposed mass rather than $F$.

\section{Theoretical Foundations of Semantic Routing}
\label{sec:theory}

Section~\ref{sec:method} defines a simple routing rule from a stage response
$r(t)$ and final contrast $d$ to evidence, contradiction, and fragility.
We first ask whether this routing is arbitrary.

To formalize this routing, we establish three operational axioms that reflect practical explanation goals. First, \emph{conservation} ensures that no response magnitude is lost or invented. Second, \emph{endpoint gating} isolates sensitivity on pairs whose final contrast is negligible under the chosen threshold. Third, \emph{directional support} distinguishes intermediate responses that move toward the model's final decision from those that oppose it. 

\begin{theorem}[Unique hard-gated semantic routing]
\label{thm:canonical}
Fix $a\in\{0,1\}$, endpoint orientation $s\in\{-1,1\}$ on the active branch, and response $r\in\mathbb{R}$. The unique nonnegative triple satisfying \textbf{conservation}, $e+c+f=|r|$; \textbf{endpoint gating}, $f=0$ for $a=1$ and $e=c=0$ for $a=0$; and \textbf{directional support}, $c=0$ when $sr\geq0$ and $e=0$ when $sr\leq0$, is
\[
(e,c,f)=\bigl(a(sr)^{+},\;a(sr)^{-},\;(1-a)|r|\bigr).
\]
\end{theorem}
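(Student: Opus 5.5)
The argument is a case analysis on the gate $a$, then on the sign of $z=sr$; in each case I show the three axioms force the triple, and then check that the claimed formula satisfies all three. That gives both existence and uniqueness.

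\emph{Inactive branch ($a=0$).} Endpoint gating gives $e=c=0$. Conservation then forces $f=|r|$. The directional-support constraints only require certain coordinates to vanish, and $e=c=0$ already satisfies them. So the triple is $(0,0,|r|)$, which matches the formula because $a(sr)^{\pm}=0$ and $(1-a)|r|=|r|$. For this case the orientation $s$ is irrelevant; it is fixed only on the active branch, and its value does not affect the computation.

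\emph{Active branch ($a=1$).} Endpoint gating gives $f=0$, so conservation reduces to $e+c=|r|=|sr|$, using $|s|=1$. I split on the sign of $z=sr$.
\begin{itemize}
\item If $z>0$, directional support gives $c=0$, so $e=|z|=z^+$ and $c=0=z^-$.
\item If $z<0$, it gives $e=0$, so $c=|z|=z^-$ and $e=0=z^+$.
\item If $z=0$, both constraints apply and $e=c=0$; this agrees with $z^+=z^-=0$ and is consistent with $e+c=0=|r|$.
\end{itemize}
In every subcase the triple is $(z^+,z^-,0)$, matching the formula at $a=1$.

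\emph{Existence.} The claimed triple is nonnegative, since positive parts, negative parts and absolute values are nonnegative. Conservation holds because $z^++z^-=|z|=|r|$. Gating and directional support hold directly from the definitions of $z^{\pm}$.

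There is no real obstacle here. The only care point is the boundary case $sr=0$, where both directional constraints are active simultaneously. One must check that they do not conflict with conservation, and they do not, because $|r|=0$ there. Nonnegativity is not needed for uniqueness in this argument, since each coordinate is pinned down by an equality, but it is part of the stated class and the formula satisfies it.
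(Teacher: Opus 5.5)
Your proof is correct and matches the paper's argument: case on the gate $a$, use gating plus conservation to pin down $(0,0,|r|)$ on the inactive branch, then split on the sign of $z=sr$ on the active branch and verify that the formula satisfies all three axioms. The only difference is cosmetic: you treat $z=0$ as a separate subcase, whereas the paper uses the overlapping cases $z\geq0$ and $z\leq0$, which agree at $z=0$.
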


Theorem~\ref{thm:canonical} removes tunable mixtures once the endpoint gate and orientation are specified. It establishes uniqueness within the stated endpoint-relative semantics, not uniqueness of hard endpoint gating itself. Under these operational axioms, the routing is uniquely determined. \emph{See Appendix~\ref{app:theory} for the proof and the positive--negative (Jordan) decomposition interpretation.}

Theorem~\ref{thm:canonical} shows that the routing is unique once the endpoint-relative semantics are fixed. We next ask whether retaining the three routed components provides information beyond ordinary magnitude.
\begin{theorem}[\decaf strictly refines response magnitude]
\label{thm:nonident}
Ordinary response magnitude is a deterministic function of the \decaf
profile,
\[
\mathrm{Abs}=E+C+F,
\]
so every decision rule based on $\mathrm{Abs}$ can also be implemented from
$(E,C,F)$. The reverse recovery fails in general: for every $m>0$, the
distinct profiles $(m,0,0)$, $(0,m,0)$, and $(0,0,m)$ all produce
$\mathrm{Abs}=m$. Hence the refinement is strict for any decision problem
that distinguishes two such profiles with positive probability.
\end{theorem}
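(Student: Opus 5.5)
The plan has three parts, taken in order: the identity $\mathrm{Abs}=E+C+F$, the failure of recovery, and the strictness claim.

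\textbf{The identity.} I would establish it pointwise first and then lift it through the averaging in \eqref{eq:population-profile}. Fix a paired example, a protocol draw, and a stage $t$. Theorem~\ref{thm:canonical} gives $e(t)+c(t)+f(t)=|r(t)|$, since the routed triple satisfies conservation. One can also check this directly. If $a=1$, then $(sr)^++(sr)^-=|sr|=|r|$ because $|s|=1$. If $a=0$, the triple is $(0,0,|r|)$. All three components are nonnegative and bounded by $|r(t)|$. So, assuming $\mathbb{E}\int|r|\,\mathrm{d}\mu<\infty$ (which the profile needs in order to be defined), linearity of the integral against $\mu$ and of the outer expectation gives
\[
E+C+F=\mathbb{E}\int_{\mathcal{T}}|r(t)|\,\mathrm{d}\mu(t)=\mathrm{Abs}.
\]
The same reasoning applies to the finite-grid quadrature, since it is also linear. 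Consequently, any decision rule $\delta(\mathrm{Abs})$ equals $\delta\circ\sigma(E,C,F)$ with $\sigma(x,y,z)=x+y+z$. This shows that the $\sigma$-algebra generated by $\mathrm{Abs}$ is contained in the one generated by the profile.

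\textbf{Non-recovery.} I would show that each of $(m,0,0)$, $(0,m,0)$, and $(0,0,m)$ is actually attained, so that the counterexample is not vacuous. To do this I would build explicit single-example configurations with $\mu$ a probability measure and constant responses. For $(m,0,0)$, take $r(t)\equiv m$, so $d=m\ge\epsilon$, $a=1$, $s=1$, and $z\equiv m$. For $(0,m,0)$, take $r(t)=-m$ on a set of full $\mu$-measure but $r(1)=d=m$. This requires $\mu(\{1\})=0$, or a small modification if $\mu$ puts mass at $1$; I expect this to be the only delicate point. For $(0,0,m)$, take $r(t)\equiv m$ except $r(1)=d=0$, so that $a=0$. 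All three configurations give $\mathrm{Abs}=m$.

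The case $m<\epsilon$ and an atom of $\mu$ at $t=1$ is the one that needs care. I would handle it by choosing trajectories so that the endpoint contribution is offset. Concretely, choose $d$ with $|d|\ge\epsilon$ and let the interior value compensate, so that the integral of $z^+$ comes out to $0$ while the integral of $z^-$ comes out to $m$. This is impossible if $\mu(\{1\})>0$, because the endpoint then contributes positive evidence $\mu(\{1\})|d|$. In that situation I would instead realize the profiles via the outer expectation, by mixing in examples, or I would state the standing assumption that $\mu$ has no atom at $1$ (e.g.\ Lebesgue measure, or interior quadrature nodes). Pinning down this realizability is the main obstacle. The algebraic part of the argument is immediate.

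\textbf{Strictness.} Since $\sigma$ is not injective, take a decision problem whose loss or target differs on two profiles with the same sum, each occurring with positive probability. Any $\mathrm{Abs}$-measurable rule must then give the same action on both, so it incurs strictly larger risk than the profile-based rule that separates them. Together with the first part, this shows that the information ordering is strict.
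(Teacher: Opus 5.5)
Your skeleton matches the paper's. Pointwise conservation from Theorem~\ref{thm:canonical}, together with linearity of the stage integral and the outer expectation, gives $E+C+F=\mathbb{E}\int|r|\,\mathrm{d}\mu$. Any $\mathrm{Abs}$-based rule then factors through $(E,C,F)\mapsto E+C+F$, and strictness is the Blackwell-type argument of Corollary~\ref{cor:strict-refinement}.

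The one real difference is the non-recovery step. The paper argues pointwise. It treats the gate and orientation as free inputs, as in Theorem~\ref{thm:canonical}, and takes a single stage response: $r=m$ on an active endpoint with $s=1$, $r=-m$ on the same endpoint, and $r=m$ on a null endpoint. It never asks whether an \emph{integrated} profile equal to exactly $(0,m,0)$ can come from a trajectory with $r(1)=d$. For the theorem as stated, this is enough. The non-recovery claim is just the arithmetic $m+0+0=0+m+0=0+0+m$, and the strictness clause already assumes that two such profiles occur with positive probability. So the realizability step you call the main obstacle is not needed for the theorem.

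Your extra care does uncover something true, but your fallback for the atom case fails. Suppose $w=\mu(\{1\})>0$, as with the paper's trapezoidal grids that include $t=1$. Every active pair has $z(1)=|d|\ge\epsilon$, so it deposits at least $w\epsilon$ of evidence. Meanwhile $C>0$ forces $\Pr(a=1)>0$, so $E\ge w\epsilon\Pr(a=1)>0$ whenever $C>0$. All contributions are nonnegative, so no mixture over examples can cancel this. The obstruction therefore holds for every $m>0$, not only for $m<\epsilon$.

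If you want a population-level statement, either assume $\mu(\{1\})=0$, or use other attainable pairs with equal $\mathrm{Abs}$ (provided $\mu$ charges $(0,1)$):
\begin{itemize}
\item $(m,0,0)$ versus $(0,0,m)$, obtained by rescaling interior responses and mixing with all-zero null pairs;
\item interior responses $+u$ versus $-u$ on an active pair with $d=\epsilon$. These share the endpoint evidence and $\mathrm{Abs}$ but split the interior mass differently between $E$ and $C$.
\end{itemize}

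There are two smaller slips in the same layer. First, $r\equiv m$ gives an active pair only when $m\ge\epsilon$, so $d$ must be decoupled from the interior values, as you already do for $(0,m,0)$. Second, the common start forces $r(0)=0$, so constant trajectories are inconsistent with the setup whenever $\mu$ charges $t=0$, as trapezoidal weights do.
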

Thus \decaf is a lossless refinement of ordinary magnitude: any analysis
based on $\mathrm{Abs}$ can be reproduced from $(E,C,F)$, while the reverse
recovery is impossible in general. The refinement is strict whenever a
decision problem distinguishes response profiles that share the same
magnitude. For example, if evidence, contradiction, and fragility are equally
likely and all produce magnitude $m$, every magnitude-only classifier receives
the same observation and cannot exceed $1/3$ accuracy, whereas the full
profile distinguishes the three roles. Section~\ref{sec:imagenet9} tests this consequence empirically after explicitly
matching ordinary response magnitude.
See Appendix~\ref{app:theory} for the Bayes limit, strict information refinement, and unequal priors.

Theorem~\ref{thm:nonident} shows that magnitude can collapse distinct response roles. A practically important case is the difference between an effect that disappears and one that reverses. Both can reduce aligned evidence, but only reversal should create contradiction.

Suppose the final contrast is active with magnitude $m>0$.
 Let $\eta\in[0,1]$ denote the probability that this effect is altered by context. With probability $1-\eta$, the response remains aligned with magnitude $m$; with probability $\eta$, it is either suppressed to zero (attenuation) or reversed to $-m$ (inversion). We consider three simple response regimes: \emph{preservation}, where the effect keeps its direction; \emph{attenuation}, where it weakens toward zero; and \emph{inversion}, where it reverses direction. 

\begin{proposition}[Contradiction separates attenuation from inversion]
Under equal aligned and opposed magnitudes,
\[
{\footnotesize
\begin{array}{lccc}
\toprule
\text{Regime} & E & C & \mathrm{Abs} \\
\midrule
\text{Preservation}       & m           & 0      & m \\
\text{Attenuation to zero}& (1-\eta)m  & 0      & (1-\eta)m \\
\text{Inversion}          & (1-\eta)m  & \eta m & m \\
\bottomrule
\end{array}
}
\]
In the inversion regime, $C/(E+C)=\eta$.
\label{prop:calibration}
\end{proposition}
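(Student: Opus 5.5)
The plan is to model each regime as a two-point mixture over stage responses on an active pair, apply the routing of Theorem~\ref{thm:canonical} pointwise, and then take expectations. Since the final contrast is active with magnitude $m\ge\epsilon$, we have $a=1$ and therefore $f\equiv 0$. The orientation $s=\operatorname{sign}(d)$ makes the aligned response equal to $z=sr=m$. We interpret ``equal aligned and opposed magnitudes'' to mean that a reversed response satisfies $z=-m$. We also read the regimes as describing the stage response $z(t)$ (integrated against $\mu$), with the reference orientation $s$ held fixed. Because $\mu$ is normalized, a constant response $z$ contributes exactly $z^{+}$ to $E$ and $z^{-}$ to $C$. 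Linearity of $\mathbb{E}\int\cdot\,\mathrm{d}\mu$ then reduces everything to computing $\mathbb{E}[z^{+}]$ and $\mathbb{E}[z^{-}]$ under each mixture.

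The cases are then short.
\begin{itemize}[leftmargin=1.4em,itemsep=1pt,topsep=0pt]
    \item \emph{Preservation.} Here $z=m$ almost surely, so $E=m$, $C=0$, and by conservation $\Abs=E+C+F=m$.
    \item \emph{Attenuation.} Here $z=m$ with probability $1-\eta$ and $z=0$ with probability $\eta$. The routing sends the zero branch to $(0,0,0)$, so $E=(1-\eta)m$, $C=0$, and $\Abs=(1-\eta)m$.
    \item \emph{Inversion.} Here $z=m$ with probability $1-\eta$ and $z=-m$ with probability $\eta$. The directional-support axiom routes the reversed branch entirely to contradiction: $z^{+}=0$ and $z^{-}=m$. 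Hence $E=(1-\eta)m$, $C=\eta m$, and $\Abs=m$.
\end{itemize}
The final ratio follows directly: $C/(E+C)=\eta m/m=\eta$, which is well defined because $m>0$.

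The main obstacle is not the arithmetic but fixing the modelling assumptions so that the table is unambiguous. First, the gate must stay active in every regime, which requires the endpoint contrast defining $s$ and $a$ to remain the reference $d$ with $|d|=m\ge\epsilon$. Otherwise attenuation to zero would flip the gate and send mass to $F$ rather than simply removing it. I would state this explicitly as the assumption that context acts on the intermediate responses being routed, not on the reference endpoint. Second, I would note that the comparison between attenuation and inversion is exactly the strict-refinement phenomenon of Theorem~\ref{thm:nonident}. Preservation and inversion share $\Abs=m$ yet differ in $C$, whereas attenuation and inversion share $E$ and are separated only by $C$.
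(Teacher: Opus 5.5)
Your proposal is correct and follows the paper's proof: each regime is a two-point mixture of the oriented stage response ($sm$, $0$, or $-sm$) with the endpoint orientation $s$ and active gate held fixed, the routing is applied pointwise, and expectations are taken. The paper leaves that fixed-reference assumption implicit. One small correction to your side remark: under your constant-response model, a flipped gate in the attenuation branch would route $|0|=0$ to $F$, so nothing changes there; the fixed reference genuinely matters only for inversion, where a reversed endpoint would flip $s$ and turn the reversed branch into evidence.
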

Attenuation and inversion can remove the same amount of aligned evidence, but only inversion creates contradiction. Thus C distinguishes loss of an effect from reversal of that effect. The controlled experiment in Section~\ref{sec:controlled}.
\emph{See Appendix~\ref{app:theory} for the proof, unequal magnitudes, and continuous mixtures.}

When contradiction and fragility vanish, conservation gives $\text{Abs}=E$. Thus DECAF reduces exactly to ordinary magnitude when the response is already fully aligned; it introduces additional distinctions only when the observed behavior contains them.

\section{Behavioral Validation Across Models and Modalities}
\label{sec:controlled}

The preceding theory defines endpoint-relative response roles. We now test whether these roles correspond to model behavior measured independently of \decaf. We begin with the 3D Shapes dataset, whose generative factors can be changed one at a time while the remaining factors are held fixed~\citep{burgess2018understanding}. This gives exact factual--counterfactual pairs and lets us formulate three controlled learning problems in which reliance, off-path sensitivity, and effect reversal can be measured independently of \decaf. Across the controlled suite, we use \emph{ResNet-18} and a small \emph{ViT}. The base grid contains \emph{30 trained models} and \emph{180 model--factor units}, with additional dedicated checkpoints and training variants for the three behavioral tests. Comparing \decaf's decomposition with these independent behavioral measurements then tests whether each component carries the intended response semantics.

\begin{figure*}[t]
    \centering
    \begin{subfigure}[t]{0.230\linewidth}
        \centering
        \includegraphics[width=\linewidth]{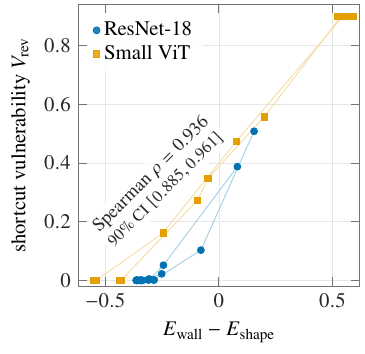}
        \caption{Population-level reliance.}
        \label{fig:e-validation-population}
    \end{subfigure}\hfill
    \begin{subfigure}[t]{0.25\linewidth}
        \centering
        \includegraphics[width=\linewidth]{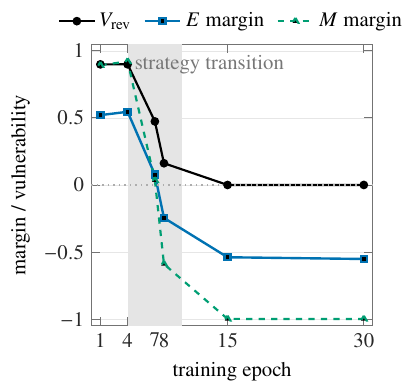}
        \caption{Within-model transition.}
        \label{fig:e-validation-trajectory}
    \end{subfigure}\hfill
    \begin{subfigure}[t]{0.234\linewidth}
        \centering
        \includegraphics[width=\linewidth]{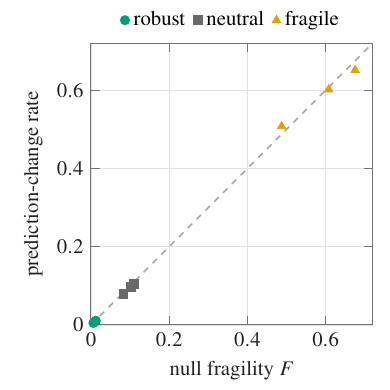}
        \caption{Endpoint-null sensitivity.}
        \label{fig:f-validation}
    \end{subfigure}\hfill
    \begin{subfigure}[t]{0.234\linewidth}
        \centering
        \includegraphics[width=\linewidth]{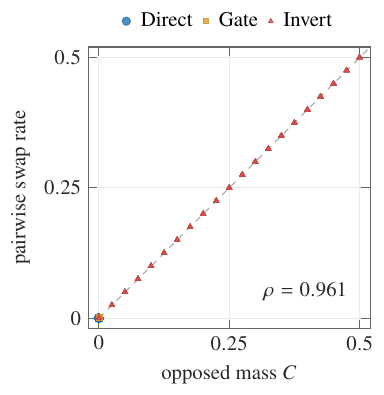}
        \caption{Label-level reversal.}
        \label{fig:c-validation}
    \end{subfigure}
    \caption{\textbf{The three response roles track independently measured model behavior.} (a--b) Evidence follows shortcut reliance across checkpoints and through a within-model strategy transition. (c) Fragility follows off-path prediction change on endpoint-null pairs. (d) Contradiction follows pairwise label reversal. Additional intervention-separation, architecture-specific, and cross-geometry diagnostics appear in Appendix~\ref{app:controlled-results}.}
    \label{fig:controlled-validation}
\end{figure*}

\textbf{Evidence.}
We train object-shape classifiers in an environment where background wall color is correlated with the label, so models may rely on either this shortcut or the object itself. Reversing the wall--label correlation at test time gives an independent measure of shortcut reliance: a wall-dependent model loses accuracy, while a shape-dependent model remains stable. Across 52 training checkpoints, the evidence margin $E_{\mathrm{wall}}-E_{\mathrm{shape}}$ correlates $0.936$ with this reversal vulnerability, with a 90\% bootstrap interval of $[0.885,0.961]$ (Figure~\ref{fig:controlled-validation}(a)). Figure~\ref{fig:controlled-validation}(b) shows the same relation during a within-model strategy transition; complete trajectories appear in Appendix~\ref{app:controlled-results}.

\textbf{Fragility.}
We construct models for which floor color has little effect once the image is fully revealed, but different effects before full reveal. To create this difference, we expose models during training to partially revealed inputs while preserving the original shape task. In \emph{fragile training}, the model is encouraged to react to changes in floor color at these partial states. In \emph{robust training}, it is instead encouraged to remain unchanged, while \emph{neutral training} uses only the original clean task without either objective. We then measure the resulting sensitivity independently using held-out floor-color interventions. If $F$ captures endpoint-null path sensitivity, it should track this prediction-change rate. It does so closely (Figure~\ref{fig:controlled-validation}(c)); intervention separation and cross-geometry checks appear in Appendix~\ref{app:controlled-results}.

\textbf{Contradiction.}
We construct three tasks with the same object-color effect in the original wall context but different behavior when the wall context changes. In \emph{Direct}, the effect is preserved; in \emph{Gate}, it disappears; and in \emph{Invert}, it reverses. The resulting label behavior---preservation, collapse, or swap---therefore provides an independent measure of what happened to the effect. If $C$ captures opposition rather than mere weakening, it should remain near zero for Direct and Gate and increase only when the effect reverses. Across 30 models and all mismatch levels, $C$ tracks the pairwise label-swap rate with $\rho=0.961$, whereas ordinary magnitude does not ($\rho=-0.036$; Figure~\ref{fig:controlled-validation}(d)). Detailed regime separation and calibration appear in Appendix~\ref{app:controlled-results}.

\textbf{Transfer across model classes and modalities.}
We next ask whether the same response roles survive beyond controlled vision models. We use a balanced 240,000-example subset of Covertype~\citep{blackard1998covertype} with 54 natural features and append a binary context and a binary candidate factor. In one set of tasks, the candidate factor has the same effect in the reference context, while changing the context either preserves, removes, or reverses that effect. In a second set, the factor remains nearly irrelevant in the reference context but can affect predictions under the alternate context. We train \emph{135 classifiers} spanning linear, tree-based, and neural models.

Crucially, these task constructions specify what the training data attempt to induce, not what the model necessarily learns. We therefore determine preservation, inversion, and endpoint-null sensitivity directly from held-out predictions. If the response roles transfer, $E$ should track realized preservation, $C$ realized inversion, and $F$ endpoint-null prediction change. See \tablename~\ref{tab:covertype-main}. 

Remark that some models trained under the inversion construction instead suppress the candidate effect, while the fragility construction becomes ordinary endpoint evidence for logistic regression. \decaf follows the behavior realized by the trained model rather than the behavior intended by the data generator; the family-level audit appears in Appendix~\ref{app:covertype-transfer}.

\begin{table*}[!t]
    \centering
    \caption{
    \textbf{Cross-model transfer of the three response roles on Covertype.}
    Entries are Spearman correlations with held-out realized behavior across
    135 classifiers. The \decaf column uses $E$ for preservation, $C$ for
    actual inversion, and $F$ for alternate-context prediction change
    restricted to endpoint-null pairs. Brackets give 95\% joint family/seed
    bootstrap intervals. $\dagger$ SHAP interaction is available only for tree
    models. Complete baselines, model-family results, threshold-conditioned
    analyses, and measured cost appear in
    Appendix~\ref{app:covertype-transfer}.
    }
    \label{tab:covertype-main}

    \small
    \setlength{\tabcolsep}{5.0pt}

    \begin{tabular}{lllrrrr}
        \toprule
        Held-out behavior
        & Role
        & \cellcolor{decafbluehead}\decaf $\rho$
        & $\Abs$
        & $M$
        & Native SHAP
        & SHAP inter. \\
        \midrule

        Preservation
        & $E$
        & \cellcolor{decafblue}\textbf{0.864}\,[0.832,0.895]
        & 0.804
        & 0.657
        & 0.781
        & $-0.251^{\dagger}$ \\

        Actual inversion
        & $C$
        & \cellcolor{decafblue}\textbf{0.987}\,[0.973,0.997]
        & $-0.207$
        & $-0.001$
        & $-0.205$
        & $0.093^{\dagger}$ \\

        Endpoint-null change
        & $F$
        & \cellcolor{decafblue}\textbf{0.974}\,[0.942,0.988]
        & 0.588
        & 0.148
        & 0.481
        & $0.069^{\dagger}$ \\

        \bottomrule
    \end{tabular}
\end{table*}

\section{Real-World Audit: Response Semantics on ImageNet-9}
\label{sec:imagenet9}

Section~\ref{sec:controlled} established the meanings of $E$, $C$, and $F$ in controlled learning problems. We now ask whether the same response semantics describe behavior on natural images, including models that were never trained for our audit. ImageNet-9 is useful for this purpose because its background variants keep the foreground object fixed while changing only the background~\citep{xiao2021noise}. This gives natural factual--counterfactual pairs in which we can ask whether a model's response to background change behaves as evidence, contradiction, or endpoint-null sensitivity.

\textbf{Setup.} We evaluate two complementary groups of models on these same background interventions. The first consists of 24 off-the-shelf ImageNet-1k classifiers. These models were trained independently of our experiment, so they test whether the response semantics established in Section~\ref{sec:controlled} remain meaningful for existing natural-image models. Their 1,000-way predictions are aggregated into the nine ImageNet-9 superclasses before evaluation.

The second group contains 48 models trained directly on ImageNet-9. We fine-tune six backbones---\emph{ResNet-50, ConvNeXt-Tiny, EfficientNet-B3, RegNetY-8GF, Swin-T, and ViT-B/16}---on four versions of the training data: the original images, images with randomly reassigned backgrounds, images with a fixed next-class background, and foreground-only images. These training conditions deliberately produce models with different degrees and directions of background dependence. This group therefore tests whether the same response semantics continue to track behavior as the model's learned use of background changes. Together, the two groups form a 72-model zoo: one provides models trained independently of our audit, while the other provides controlled diversity in the behavior being audited.

\begin{figure*}[t]
\centering
\begin{subfigure}[t]{0.325\linewidth}
    \centering
    \includegraphics[width=\linewidth]{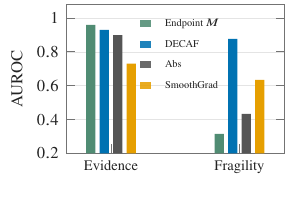}
    \caption{Behavioral alignment.}
\end{subfigure}\hfill
\begin{subfigure}[t]{0.34\linewidth}
    \centering
    \includegraphics[width=\linewidth]{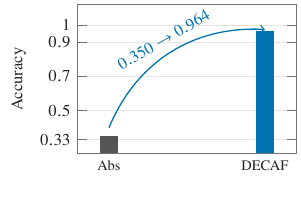}
    \caption{Response-role accuracy.}
\end{subfigure}\hfill
\begin{subfigure}[t]{0.33\linewidth}
    \centering
    \includegraphics[width=\linewidth]{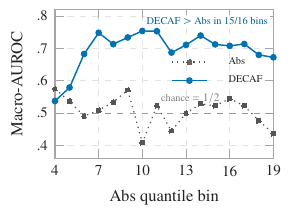}
    \caption{Across magnitude bins.}
\end{subfigure}

\caption{\textbf{Response semantics transfer to ImageNet-9.}
(a) Evidence and fragility align with independently measured natural-image behaviors.
(b) Nearly identical response magnitudes can correspond to different response semantics, which DECAF preserves.
(c) The advantage persists across magnitude bins.}
\label{fig:imagenet9-mechanism}
\end{figure*}

\begin{figure*}[t]
\centering
\begin{subfigure}[t]{0.46\linewidth}
    \centering
    \includegraphics[width=\linewidth]{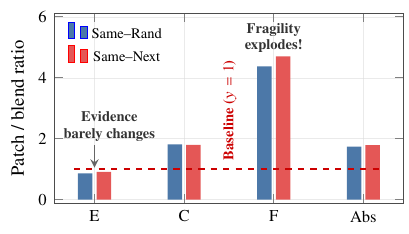}
    \caption{Response composition under patch reveal.}
\end{subfigure}
\hspace{5mm}
\begin{subfigure}[t]{0.39\linewidth}
    \centering
    \includegraphics[width=\linewidth]{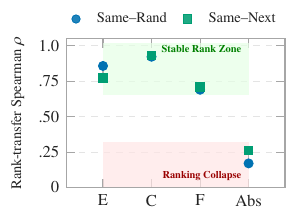}
    \caption{Model-rank transfer across reveal paths.}
\end{subfigure}

\caption{\textbf{Reveal paths change response composition and ranking.}
(a) Patch reveal increases ordinary magnitude mainly through contradiction and fragility rather than evidence.
(b) Ordinary-magnitude rankings transfer poorly across reveal paths, while evidence and contradiction remain substantially more stable.}
\label{fig:protocol-audit}
\end{figure*}

\subsection{Response semantics survive natural images}

The first question is whether the response roles validated in
Section~\ref{sec:controlled} remain connected to independently measured
behavior on natural images. We consider two behaviors. Background reliance
asks whether changing only the background substantially disrupts the model's
prediction. Endpoint-null sensitivity asks whether a pair with little final
background effect can nevertheless remain sensitive to held-out background
corruptions. Both behaviors are defined independently of \decaf. If the
response semantics transfer, $E$ should identify the former and $F$ the latter.

For background reliance, we mark a pair as positive when the model correctly classifies the image with a same-class background, but replacing that background with a random-class one either changes the predicted class or lowers the true-class probability by at least $0.20$. We then ask whether these behavior-positive pairs tend to receive larger response scores. Using $E$ directly as a ranking score gives an AUROC of $0.930$: a value of $0.5$ corresponds to random ranking and $1$ to perfect separation. Ordinary response magnitude reaches $0.900$, while endpoint magnitude reaches $0.960$ (Figure~\ref{fig:imagenet9-mechanism}(a)). The strong endpoint result is expected because this behavioral target is itself defined by a prediction change at the endpoint.

For endpoint-null sensitivity, we first restrict attention to pairs whose final background effect is negligible, $|d|<0.02$. We then apply a separate set of background corruptions that are not used to construct the reveal path. A pair is marked positive if any of these held-out corruptions changes the predicted class or shifts the true-class probability by at least $0.20$. We ask whether these independently identified sensitive pairs tend to receive larger $F$ values than the remaining endpoint-null pairs. $F$ reaches AUROC $0.878$, compared with $0.433$ for ordinary response magnitude, $0.316$ for endpoint magnitude, and $0.635$ for SmoothGrad (Figure~\ref{fig:imagenet9-mechanism}(a)).

Together, the two tests show complementary behavior: $E$ tracks consequential background use when the final effect is present, whereas $F$ exposes sensitivity that remains when that final effect is negligible. Appendix~\ref{app:imagenet9-setup} gives the full pair construction, held-out corruption set, behavioral-label definitions, and baseline details.

\subsection{Same Response Magnitude, Different Response Semantics}

We next ask whether the decomposition reveals information that ordinary
response magnitude does not already contain. We reuse the independently defined
evidence and fragility indicators from Section~6.1 and add a contradiction
indicator: contradiction behavior is present when changing only the background
makes the model switch from the foreground class to the class associated with
the new background. These three behavioral indicators may overlap.

We then compare cases with different behavioral patterns but nearly identical
ordinary response magnitudes. Requiring their $\Abs$ values to differ by at
most $5\%$ yields 8,289 matched comparisons. Because a case may exhibit more
than one behavior, we do not force it into a single ground-truth class.
Instead, we ask whether the largest of $E$, $C$, and $F$ corresponds to a
behavior that is actually present; overlap and ties are handled as detailed in
Appendix~\ref{app:imagenet9-setup}.

Under this test, a single magnitude provides no basis for preferring evidence,
contradiction, or fragility. $\Abs$ reaches a role-agreement accuracy of
$0.350$, whereas \decaf reaches $0.964$
(Figure~\ref{fig:imagenet9-mechanism}(b)). Thus, nearly identical response
magnitudes can accompany substantially different observed behaviors, while
their decomposition preserves this distinction.

The conclusion also holds beyond the matched cases. Within narrow bins of
ordinary response magnitude, macro-AUROC rises from $0.517$ for $\Abs$ to
$0.677$ for \decaf, and \decaf exceeds $\Abs$ in 15 of the 16 bins that
contain all three behavioral indicators
(Figure~\ref{fig:imagenet9-mechanism}(c)). Appendix~\ref{app:imagenet9-setup}
gives the exact matching rule, overlap and tie handling, per-bin support, and
complete baseline comparisons.

\subsection{What Changes When the Reveal Path Changes?}

We finally ask whether the same endpoint information can produce different
conclusions when it is revealed differently. We keep the model and both
factual--counterfactual endpoints fixed. In one path, the whole image gradually
changes from a common blurred image toward each endpoint; in the other, the
same endpoint information is revealed region by region. We call them the
blend and patch paths.

Ordinary response magnitude increases by about $1.8\times$ under the patch
path, but the increase is not uniform across response roles
(Figure~\ref{fig:protocol-audit}(a)). Evidence remains close to its blend value,
contradiction grows by about $1.8\times$, and fragility by more than
$4\times$. This is consistent with their semantics: with the endpoints fixed,
evidence remains oriented by the same final contrast, whereas fragility
explicitly measures sensitivity along the chosen reveal path. The pattern is
not guaranteed by the definition, but here the larger ordinary response comes
primarily from fragility rather than additional evidence.

We also ask whether the reveal path changes conclusions about how models
compare. For each model, we average $E$, $C$, $F$, and $\Abs$ over test images
and rank the models separately by each quantity under the blend and patch
paths. Spearman correlation measures the agreement between the two rankings.
Ordinary-magnitude rankings are highly unstable
($\rho=0.17/0.26$ for Same--Rand/Same--Next), whereas evidence rankings remain
at $0.86/0.77$ and contradiction rankings near $0.93$. Fragility is
intermediate at $0.69/0.71$, consistent with its greater path dependence
(Figure~\ref{fig:protocol-audit}(b)). Patch-order and threshold checks are
reported in Appendix~\ref{app:imagenet9-results}.

\section{External Attribution Benchmarks and Large-Model Scaling}
\label{sec:forward-attribution}

Sections~\ref{sec:controlled}--\ref{sec:imagenet9} validate the response
semantics of \decaf. We now ask whether it also works as ordinary feature
attribution under targets defined without $E$, $C$, or $F$. Here the pair is
constructed from the feature intervention: $\mathbf{x}^{+}$ is the original
image and $\mathbf{x}^{-k}$ changes only part or patch $k$ using a fixed
removal or replacement rule; $E_k$ ranks the features. FunnyBirds evaluates
semantic parts under two separate replacement interventions, while ImageNet-1k
IDSDS evaluates 16 fixed patches by single-patch deletion. Quality is the
per-image Spearman correlation between the attribution ranking and this
intervention-based ranking. Both datasets use
\emph{ResNet-50, VGG-16, and ViT-B/16}, with every method evaluated on the same
eligible images within each model. \decaf-3/5/9 use three, five, or nine reveal
stages.

\begin{table*}[!t]
\centering

\begin{minipage}[t]{0.555\textwidth}
\centering

\caption{
External attribution quality and measured ImageNet-1k compute.
Quality is macro-averaged equally over the three architectures; time and memory
are ImageNet-1k measurements.
KernelSHAP is shown separately because it directly queries the same patch-deletion
intervention used to define the ImageNet-1k evaluation target, giving it
evaluation-specific information unavailable to the general baselines and \decaf.
}
\label{tab:forward-attribution-main}

\setlength{\tabcolsep}{2.2pt}

\resizebox{\textwidth}{!}{
\begin{tabular}{llrrrr}
\toprule
Method
&
Access
&
\shortstack[r]{FunnyBirds\\$\rho\uparrow$}
&
\shortstack[r]{ImageNet-1k\\$\rho\uparrow$}
&
\shortstack[r]{ms/img\\$\downarrow$}
&
\shortstack[r]{Peak GiB\\$\downarrow$}
\\
\midrule

\multicolumn{6}{l}{\emph{DECAF}} \\

\rowcolor{decafblue}
\decaf-3
& Forward
& 0.372
& 0.359
& 22.1
& 11.2
\\

\rowcolor{decafblue}
\decaf-5
& Forward
& 0.403
& 0.367
& 36.7
& 11.2
\\

\rowcolor{decafblue}
\decaf-9
& Forward
& \textbf{0.406}
& \textbf{0.379}
& 65.9
& 11.2
\\

\addlinespace

\multicolumn{6}{l}{\emph{General-purpose attribution}} \\

DeepLIFT
& Backward
& 0.197
& 0.341
& 3.1
& 6.6
\\

IG-32
& Backward
& 0.271
& 0.242
& 28.7
& 50.4
\\

IG-U-32
& Backward
& 0.200
& 0.295
& 28.7
& 50.4
\\

RISE-512
& Sampling
& 0.302
& 0.179
& 216.6
& 14.4
\\

\addlinespace

\multicolumn{6}{l}{\emph{Endpoint-only reference}} \\

\rowcolor{refgray}
Endpoint $M$
& Endpoint
& 0.324
& 0.371
& --
& --
\\

\addlinespace

\multicolumn{6}{l}{\emph{Benchmark-aligned reference (uses the evaluation intervention)}} \\

\rowcolor{targetorange}
KernelSHAP-512
& Sampling
& 0.299
& 0.447
& 216.4
& 14.4
\\

\bottomrule
\end{tabular}
}

\end{minipage}%
\hspace{0.030\textwidth}%
\begin{minipage}[t]{0.395\textwidth}
\centering

\caption{DINOv2 ViT-g/14 on 238 PartImageNet strict-common-support images.}
\label{tab:dinov2g-stress}

\setlength{\tabcolsep}{3.0pt}

\resizebox{\textwidth}{!}{
\begin{tabular}{lrrr}
\toprule
Method
&
\shortstack[r]{Spearman\\$\uparrow$}
&
\shortstack[r]{sec/img\\$\downarrow$}
&
\shortstack[r]{Peak GiB\\$\downarrow$}
\\
\midrule

\rowcolor{decafblue}
\decaf-3
& 0.208
& 0.190
& 5.53
\\

\rowcolor{decafblue}
\textbf{\decaf-5}
& 0.215
& 0.300
& 5.66
\\

\rowcolor{decafblue}
\decaf-9
& 0.220
& 0.526
& 6.33
\\

IG-16
& \textbf{0.222}
& 0.710
& 9.01
\\

IG-32
& 0.213
& 1.425
& 13.39
\\

GradientSHAP
& 0.171
& 0.407
& 27.67
\\

SmoothGrad-16
& 0.042
& 0.438
& 26.43
\\

DeepLIFT
& 0.068
& \textbf{0.084}
& 7.19
\\

\bottomrule
\end{tabular}
}

\vspace{0.45em}


\captionof{table}{
Gain by the trajectory beyond endpoint-only $M$.
Entries are paired
$\Delta\rho=\rho(\mathrm{DECAF})-\rho(M)$
with 95\% confidence intervals.
Larger values indicate better.
}
\label{tab:endpoint-trajectory-main}

\centering
\setlength{\tabcolsep}{3.5pt}

\resizebox{\textwidth}{!}{
\begin{tabular}{lcc}
\toprule
Contrast
&
\shortstack{FunnyBirds\\different intervention}
&
\shortstack{ImageNet-1k\\same deletion}
\\
\midrule

\decaf-3 $- M$
&
\cellcolor{gainpos}
+.049 [.033, .064]
&
\cellcolor{gainneg}
$-$.012 [$-$.014, $-$.010]
\\

\decaf-5 $- M$
&
\cellcolor{gainpos}
+.080 [.064, .096]
&
\cellcolor{gainneg}
$-$.004 [$-$.007, $-$.002]
\\

\decaf-9 $- M$
&
\cellcolor{gainpos}
+.083 [.067, .099]
&
\cellcolor{gainpos}
+.007 [.004, .010]
\\

\bottomrule
\end{tabular}
}

\end{minipage}

\end{table*}

All three \decaf trajectories exceed every listed general-purpose baseline on
both datasets (Table~\ref{tab:forward-attribution-main}). KernelSHAP reaches
$0.447$ on ImageNet-1k, but it directly queries the same deletion game used to
define that target and is therefore shown separately. The full 50,000-image
ImageNet check preserves the ordering
$\decaf\text{-}5>\mathrm{IG\text{-}U\text{-}32}>\mathrm{IG\text{-}32}$.
Appendix~\ref{app:forward-attribution} also reruns the official FunnyBirds
RISE-6000 Single Deletion protocol and reproduces the published score
within $0.005$ on all three architectures~\citep{hesse2023funnybirds}

The compute advantage becomes clearer at larger model scale. On the \emph{1B-parameter} model
\emph{DINOv2 ViT-g/14}, \decaf-5 and IG-32 have nearly identical quality
($0.215$ versus $0.213$), while IG-32 requires $4.75\times$ the wall time and
$2.36\times$ the peak allocated memory
(Table~\ref{tab:dinov2g-stress}).

Table~\ref{tab:endpoint-trajectory-main} clarifies what the trajectory adds.
On FunnyBirds, evaluation changes parts differently from the operation used
to form $\mathbf{x}^{-k}$; the trajectory adds about $0.08$ Spearman beyond
$M$. ImageNet-1k instead evaluates the same patch deletion used to form
$\mathbf{x}^{-k}$, so $M$ is already aligned with the target; five stages
nearly match it, while nine stages add only $0.007$. Endpoint information therefore
carries most of the attribution signal when evaluation repeats the same
intervention. The trajectory adds its clearest value when the ranking must
transfer to a different intervention. Complete intervals, per-architecture
results, and boundary cases are in Appendix~\ref{app:forward-attribution}.

\section{Discussion and Conclusion}
\label{sec:discussion}

Perturbation magnitude tells us how strongly a model responds, but responses
of the same size can reflect different behaviors. \decaf uses the final
factual--counterfactual contrast to separate intermediate responses into
evidence, contradiction, and fragility, while preserving ordinary magnitude
exactly, $\Abs=E+C+F$.

The experiments show that these distinctions are not artifacts of the
definitions. They track independently measured behavior in controlled vision
and tabular settings, remain distinguishable on natural images after magnitude
is controlled, and remain useful under external attribution criteria.
ImageNet further shows why this matters: changing only the reveal path can
greatly increase total response without increasing evidence.

These response roles are relative to the chosen counterfactual pair, score,
and reveal path; they are not claims about hidden causal mechanisms. The
endpoint most directly describes its own intervention, while the trajectory is
most useful beyond that intervention. Magnitude tells us how much a model
reacts; \decaf preserves that quantity while revealing what kind of response
produced it.

\newpage
\appendix
\appendixpage

{
\small
\setcounter{tocdepth}{1}
\startcontents[sections]
\printcontents[sections]{l}{1}{}
}

\section{Detailed Related Work}
\label{app:related}

The closest literatures all study model response, but they attach different objects and semantics to that response. We organize the comparison around the object being explained and the question it answers.

\subsection{Gradient, path, and signed attribution}

Gradient explanations measure local sensitivity, while path methods accumulate sensitivity from a baseline to the input~\citep{simonyan2014deep,selvaraju2017gradcam,sundararajan2017axiomatic,smilkov2017smoothgrad,xu2020attribution,kapishnikov2021guided,you2026joint}. Layer-wise relevance propagation, DeepLIFT, FullGrad, and SHAP instead decompose a prediction into additive feature contributions under conservation, reference, or game-theoretic principles~\citep{bach2015lrp,shrikumar2017deeplift,srinivas2019fullgrad,lundberg2017unified}. Some of these methods preserve positive and negative contributions~\citep{dhurandhar2018pertinent}.

DECAF studies a different object. It does not distribute one prediction across input coordinates. It decomposes the scalar response of an already specified paired intervention. The clean endpoint supplies the orientation, and the endpoint-null branch separates fragility from positive or negative evidence. Spatial saliency and DECAF are therefore complementary: saliency indicates where a response is localized, whereas DECAF indicates what semantic role the paired response plays.

\subsection{Perturbation, counterfactual, and causal explanation}

Occlusion, Meaningful Perturbations, Extremal Perturbations, RISE, LIME, and removal-based Shapley methods construct interventions and summarize the resulting model changes~\citep{zeiler2014visualizing,fong2017meaningful,fong2019extremal,petsiuk2018rise,ribeiro2016why,covert2021removing}. Counterfactual and contrastive methods retrieve or synthesize changes that alter a decision or support a contrast~\citep{goyal2019counterfactual,chang2019counterfactual,dhurandhar2018pertinent,pmlr-v258-you25a,gu2026discover}. Causal attribution and concept-intervention methods interpret relevance through explicit interventions on inputs, representations, or human-interpretable concepts~\citep{chattopadhyay2019causal,janzing2020causal,goyal2020cace,kim2018tcav,koh2020conceptbottleneck}.

DECAF takes the paired intervention as input rather than searching for it. Counterfactual construction defines the factor and identification assumptions; the paired trajectory measures the response; endpoint orientation then decomposes that response into evidence, contradiction, and fragility. The same decomposition can therefore sit on top of different removal operators or counterfactual generators.

\subsection{Baselines, off-manifold effects, and faithfulness evaluation}

Baseline and path choices can substantially change attribution~\citep{sturmfels2020baselines,kindermans2019unreliability}, and perturbations can move samples away from the data manifold~\citep{frye2020manifold}. Sanity checks, retraining-based evaluations, infidelity measures, and debiased removal metrics test whether explanations reflect the model rather than artifacts of the evaluation protocol~\citep{adebayo2018sanity,hooker2019benchmark,yeh2019infidelity,rong2022consistent}. Other work documents label leakage, explanation fragility, and adversarial manipulation~\citep{jethani2023labelleakage,ghorbani2019interpretation,dombrowski2019explanations,slack2020fooling}.

DECAF does not claim path invariance. It keeps the intervention protocol explicit and asks which component changes when the protocol changes. This distinction matters in ImageNet-9: nested-patch reveal raises total response by roughly $80\%$, but the increase comes primarily from contradiction and fragility rather than evidence. DECAF addresses a semantic ambiguity that remains even when a paired response has been measured faithfully.

\subsection{Efficiency, black-box access, and benchmark context}

Black-box perturbation methods trade model access for repeated queries; RISE and dense occlusion can require many forward evaluations, while amortized methods such as FastSHAP add a separate explainer-training problem~\citep{petsiuk2018rise,jethani2022fastshap}. DECAF instead reuses the signed scores already computed by a paired perturbation curve. The routing step adds no model queries, requires no gradients or internal activations, and can be batched across examples, stages, factors, and models.

ImageNet-9 is a standard setting for studying background reliance and distribution shift~\citep{xiao2021noise,geirhos2020shortcut}. DECAF asks a finer question than whether a model responds to background: is that response endpoint-aligned evidence, endpoint-opposed contradiction, or sensitivity that appears only when the endpoint effect is null?

\FloatBarrier
\section{Theoretical Foundations and Proofs}
\label{app:theory}

This appendix develops the compact theory used in the main text. We begin with the pointwise decomposition, then lift it to a trajectory, a population, and a decision problem. The proofs require only integrability of the signed response and do not assume differentiability, smoothness, or a particular model class.

\subsection{Positive and negative parts}
\label{app:jordan}

For a real number $u$, define $u^{+}=\max\{u,0\}$ and $u^{-}=\max\{-u,0\}$. Then
\begin{equation}
    u=u^{+}-u^{-},
    \qquad
    |u|=u^{+}+u^{-},
    \qquad
    u^{+}u^{-}=0.
    \label{eq:positive-negative}
\end{equation}
For an integrable function $z:\mathcal T\rightarrow\mathbb R$, the signed measure $\nu(A)=\int_A z(t)\,\mathrm d\mu(t)$ admits the Jordan decomposition $\nu=\nu^{+}-\nu^{-}$, where $\mathrm d\nu^{+}=z^{+}\mathrm d\mu$ and $\mathrm d\nu^{-}=z^{-}\mathrm d\mu$. DECAF applies this decomposition to the endpoint-oriented response on the active branch and reserves a separate branch for endpoint-null pairs.

\subsection{Canonicality}

\begin{proof}[Proof of Theorem~\ref{thm:canonical}]
Fix $a$, $s$, and $r$. If $a=0$, endpoint gating requires $e=c=0$. Conservation then forces $f=|r|$. Hence the triple is unique.

Now suppose $a=1$. Endpoint gating gives $f=0$. Let $z=sr$. If $z\geq0$, directional support requires $c=0$ and conservation gives $e=|r|=z=z^{+}$. If $z\leq0$, directional support requires $e=0$ and conservation gives $c=|r|=-z=z^{-}$. Thus
\begin{equation}
    (e,c,f)=\bigl(a(sr)^{+},a(sr)^{-},(1-a)|r|\bigr).
\end{equation}
The construction is nonnegative, conserves $|r|$, obeys the gate, and has the stated support. Therefore it is the unique triple satisfying the axioms.
\end{proof}

The theorem also yields a projection interpretation. On an active pair, $e$ is the magnitude of the Euclidean projection of $r$ onto the endpoint-aligned ray $\{s\lambda:\lambda\geq0\}$, while $c$ is the magnitude of the projection onto the opposite ray. The null branch is not a projection. It records that the endpoint does not provide a stable orientation at the chosen threshold.

\subsection{Magnitude non-identifiability and strict refinement}

\begin{proof}[Proof of Theorem~\ref{thm:nonident}]
Fix $m>0$. Consider three responses. First, let the endpoint be active with $s=1$ and let $r=m$. Then $(e,c,f)=(m,0,0)$. Second, keep the same endpoint and let $r=-m$. Then $(e,c,f)=(0,m,0)$. Third, let the endpoint be null and let $r=m$. Then $(e,c,f)=(0,0,m)$. In every case, the observed magnitude is $|r|=m$. A statistic that observes only $|r|$ therefore cannot distinguish the three mechanisms.
\end{proof}

\begin{corollary}[Strict response refinement]
\label{cor:strict-refinement}
The DECAF profile is at least as informative as ordinary magnitude for every decision problem, because $\Abs=E+C+F$ is a deterministic function of the profile. It is strictly more informative for any mechanism-identification problem that assigns positive probability to two distinct profiles with the same magnitude.
\end{corollary}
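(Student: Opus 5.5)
The corollary splits into a weak half (at least as informative) and a strict half, and I will prove them in that order. The weak half is a garbling argument. Since $\Abs=E+C+F$ holds identically (by conservation in Theorem~\ref{thm:canonical}, integrated against $\mu$ and then averaged, as in Theorem~\ref{thm:nonident}), there is a deterministic map $g(E,C,F)=E+C+F$ that recovers $\Abs$ from the profile. So for any decision problem with loss $L$, action set $\mathcal{A}$, and latent mechanism $Y$, any rule $\delta(\Abs)$ can be reproduced as the profile-based rule $\delta\circ g$, with exactly the same risk. Taking the infimum over rules gives Bayes risk based on $(E,C,F)$ no larger than Bayes risk based on $\Abs$. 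Equivalently, $\sigma(\Abs)\subseteq\sigma(E,C,F)$, so the profile is Blackwell-sufficient for $\Abs$.

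For the strict half I will make "mechanism identification" concrete. Suppose the latent mechanism $Y$ is a function of the profile (each mechanism class is the role with nonzero mass). Suppose also that two distinct profiles $p_1\neq p_2$ with $g(p_1)=g(p_2)=m$ each have positive probability; Theorem~\ref{thm:nonident} shows such pairs exist, for instance $(m,0,0)$ and $(0,m,0)$. Under $0$--$1$ loss the profile-based rule that reads $Y$ off the profile has zero error on $\{p_1,p_2\}$. Any $\Abs$-based rule sees the same observation $m$ on both events. So it must output a single label there and errs on at least one of them, giving conditional error at least $\min\{P(p_1),P(p_2)\}/(P(p_1)+P(p_2))>0$ on the event $\{\Abs=m\}$. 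This gives a strict gap in Bayes risk. Under equal priors on the three profiles of Theorem~\ref{thm:nonident}, it reproduces the $1/3$ ceiling mentioned in the main text.

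The main obstacle is measure-theoretic. With continuous $\Abs$ the single value $\{\Abs=m\}$ may have probability zero, so "positive probability on two profiles with the same magnitude" has to be read as a positive-probability set $B$ of magnitudes. On $B$, the conditional law of the profile given $\Abs$ must be non-degenerate across different $Y$. I will handle this by working with regular conditional distributions. The strict inequality holds whenever $P\bigl(\max_y P(Y=y\mid\Abs)<1\bigr)>0$ while $Y$ is $\sigma(E,C,F)$-measurable, and I will state the corollary's hypothesis in exactly that form. In the discrete case stated in the text the hypothesis reduces to two atoms with equal magnitude.
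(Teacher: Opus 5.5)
Your proposal is correct and follows the paper's proof. The weak half composes any $\Abs$-based rule with $(E,C,F)\mapsto E+C+F$, and the strict half uses the equal-magnitude profiles of Theorem~\ref{thm:nonident} to show that no magnitude-only rule can separate them; the paper leaves this second step to ``strict Blackwell refinement.'' Your explicit $0$--$1$ loss bound and your reformulated hypothesis ($Y$ profile-measurable with $\Pr(\max_y\Pr(Y=y\mid\Abs)<1)>0$) make precise what the paper leaves informal, in particular that the two equal-magnitude profiles must carry different mechanism labels.
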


\begin{proof}
Any rule that uses $\Abs$ can be composed with the map $(E,C,F)\mapsto E+C+F$. Theorem~\ref{thm:nonident} gives distinct profiles that share one magnitude. A decision problem that rewards correct mechanism identification separates those profiles, so no magnitude-only rule can match the best rule that observes the full profile. This is the standard logic of strict Blackwell refinement~\citep{blackwell1953equivalent}.
\end{proof}

A useful finite case makes the gap concrete. Suppose evidence, contradiction, and fragility are equally likely and all produce magnitude $m$. Every magnitude-only classifier receives the same observation, so its best accuracy is $1/3$. Under unequal priors, the best magnitude-only accuracy is the largest prior. The DECAF profile identifies the mechanism exactly in this idealized construction.

\subsection{Preservation, attenuation, and inversion}

\begin{proof}[Proof of Proposition~\ref{prop:calibration}]
Let the endpoint orientation be $s$. Under preservation, the stage response is $sm$, so $E=m$, $C=0$, and $\Abs=m$.

Under attenuation, the response is $sm$ with probability $1-\eta$ and zero with probability $\eta$. Taking expectations gives $E=(1-\eta)m$, $C=0$, and $\Abs=(1-\eta)m$.

Under inversion, the response is $sm$ with probability $1-\eta$ and $-sm$ with probability $\eta$. The first branch contributes $m$ to evidence, while the second contributes $m$ to contradiction. Therefore $E=(1-\eta)m$, $C=\eta m$, and $\Abs=m$. The ratio is $C/(E+C)=\eta$.
\end{proof}

The equal-magnitude assumption isolates the inversion probability. If the aligned and opposed branches have magnitudes $m_{+}$ and $m_{-}$, then
\begin{equation}
    E=(1-\eta)m_{+},
    \qquad
    C=\eta m_{-},
    \qquad
    \frac{C}{E+C}=\frac{\eta m_{-}}{(1-\eta)m_{+}+\eta m_{-}}.
    \label{eq:unequal-inversion}
\end{equation}
Thus the opposed fraction measures probability mass only when the two branches have comparable effect size. In general, $C$ measures opposed causal mass.

\subsection{Population factorization}
\label{app:population-factorization}

Let $\pi=\Pr(|d|\geq\varepsilon)$. Define branch-conditional summaries whenever the corresponding branch has positive probability. Since evidence and contradiction vanish on the null branch, while fragility vanishes on the active branch,
\begin{equation}
    E=\pi E^{\mathrm{active}},
    \qquad
    C=\pi C^{\mathrm{active}},
    \qquad
    F=(1-\pi)F^{\mathrm{null}}.
    \label{eq:population-factorization}
\end{equation}
Unconditional quantities combine branch prevalence and conditional intensity. Conditional quantities answer a different question and become unstable when their branch is rare. The factorization is therefore useful when an external outcome is defined only within the active or null branch, but the primary experiments report unconditional population summaries.

\subsection{Invariances}
\label{app:invariances}

\paragraph{Endpoint swap.}
Swapping $\mathbf x^{+}$ and $\mathbf x^{-}$ multiplies both $d$ and $r(t)$ by $-1$. The oriented response $s r(t)$ is unchanged. Hence $M,E,C,F,$ and $\Abs$ are invariant.

\paragraph{Score translation.}
Replacing $q$ by $q+b$ leaves every difference unchanged.

\paragraph{Positive affine score reparameterization.}
Let $q'=\lambda q+b$ with $\lambda>0$. Then
$d'=\lambda d$, $r'(t)=\lambda r(t)$, and $s'=s$.
If the endpoint threshold is transformed consistently as
$\varepsilon'=\lambda\varepsilon$, the gate is unchanged and
\[
(M',E',C',F',\Abs')
=
\lambda(M,E,C,F,\Abs).
\]
Hence all normalized component proportions are invariant.

With a numerically fixed threshold $\varepsilon$, however, the gate obeys
\[
a'_{\varepsilon}
=
\mathbf{1}\{|d|\ge\varepsilon/\lambda\}.
\]
Consequently,
\[
E'_{\varepsilon}=\lambda E_{\varepsilon/\lambda},\qquad
C'_{\varepsilon}=\lambda C_{\varepsilon/\lambda},\qquad
F'_{\varepsilon}=\lambda F_{\varepsilon/\lambda}.
\]
Thus positive scaling preserves component proportions only when gate
membership is preserved. Otherwise, the discrepancy is controlled by
the threshold-mass bound in Appendix~B.8.

\paragraph{Negative score scaling.}
A negative scaling reverses the semantic meaning of the target score. The endpoint orientation still makes the component magnitudes invariant after multiplying by $|\lambda|$, but the analyst has changed the behavior being explained. We therefore define the score direction before analysis.

\subsection{Protocol reparameterization}
\label{app:reparameterization}

Let $h:\widetilde{\mathcal T}\rightarrow\mathcal T$ be an increasing bijection and define the reparameterized path $\widetilde r(u)=r(h(u))$. Pointwise components are reindexed:
\begin{equation}
    \widetilde e(u)=e(h(u)),
    \quad
    \widetilde c(u)=c(h(u)),
    \quad
    \widetilde f(u)=f(h(u)).
\end{equation}
If the integration measure is pushed forward consistently, $\widetilde\mu=h^{-1}_{\#}\mu$, then the integrated profile is invariant. If both paths are instead integrated with a uniform coordinate measure, their AUCs may differ. Dynamic DECAF summaries are therefore protocol-relative, and every experiment reports the path and stage measure.

\subsection{Threshold stability}
\label{app:threshold-theory}

Let $0\leq\varepsilon_1<\varepsilon_2$ and suppose the per-example path summary is bounded by $B$. The two gates differ only on pairs satisfying $\varepsilon_1\leq|d|<\varepsilon_2$. Consequently, for each component $G\in\{E,C,F\}$,
\begin{equation}
    |G_{\varepsilon_2}-G_{\varepsilon_1}|
    \leq
    B\,\Pr\!\left(\varepsilon_1\leq|d|<\varepsilon_2\right).
    \label{eq:threshold-bound}
\end{equation}
The threshold is stable when little endpoint mass lies near the boundary. We report sensitivity over multiple thresholds rather than treating one numerical cutoff as universal.

\FloatBarrier
\section{Estimators, Complexity, and Implementation}
\label{app:estimators}

DECAF reuses the signed scores of a paired perturbation curve. Once those scores are available, the decomposition is elementwise.

\subsection{Finite-grid estimators}

Let $t_1,\ldots,t_T$ be stage points with nonnegative quadrature weights $w_1,\ldots,w_T$ that sum to one. For pair $i$, define
\begin{equation}
    E_i=\sum_{j=1}^{T}w_j e_i(t_j),
    \quad
    C_i=\sum_{j=1}^{T}w_j c_i(t_j),
    \quad
    F_i=\sum_{j=1}^{T}w_j f_i(t_j).
\end{equation}
For $N$ pairs, the empirical estimates are
\begin{equation}
    \widehat E=\frac1N\sum_{i=1}^{N}E_i,
    \quad
    \widehat C=\frac1N\sum_{i=1}^{N}C_i,
    \quad
    \widehat F=\frac1N\sum_{i=1}^{N}F_i,
    \quad
    \widehat M=\frac1N\sum_{i=1}^{N}|d_i|.
    \label{eq:finite-estimator}
\end{equation}
Every implementation checks the exact sample-level identity $|r_i(t_j)|=e_i(t_j)+c_i(t_j)+f_i(t_j)$. We use trapezoidal weights on ordered grids unless stated otherwise. When the protocol contains randomness, factual and counterfactual branches share that randomness; repeated paths are averaged or retained as clusters for uncertainty estimation.

\subsection{Query complexity}
\label{app:complexity}

Suppose we evaluate $K$ factors, $T$ stages, $J$ counterfactual maps per factor, and $R$ protocol repetitions. If factual branch evaluations are shared across factors, the number of model evaluations per example is
\begin{equation}
    RT(1+KJ).
    \label{eq:query-complexity}
\end{equation}
For one factor and one map this is $2RT$. DECAF adds no model evaluations to the ordinary signed paired trajectory; its arithmetic cost is linear in the number of scalar responses.

The ImageNet-9 main path uses $T=9$, so it requires 18 forward evaluations per pair. Under our configurations, Integrated Gradients and SmoothGrad each use 16 forward--backward evaluations, BlurIG uses 12, Occlusion uses 49 forward evaluations, and RISE uses 256. These counts describe access and query structure rather than hardware-independent wall time.

\subsection{Memory, batching, and black-box models}

A streaming implementation stores the endpoint sign, current stage scores, and three accumulators. Its additional memory is $O(BK)$ scalars for batch size $B$ and $K$ factors, beyond the model and input batch. Examples, branches, stages, factors, counterfactual maps, repetitions, and models provide independent batching dimensions.

DECAF requires no differentiability. It applies to neural networks, tree ensembles, simulators, and remote APIs whenever a stable real-valued score is available. For stochastic models, branches should share random seeds or use repeated queries. Hard labels are formally sufficient, but probabilities, logits, margins, regression values, or action values produce more informative endpoint effects.

\FloatBarrier
\section{Controlled Experimental Setup}
\label{app:controlled-setup}

The controlled suite gives DECAF an identifiable target before we move to natural images. Every experiment uses exact factor interventions in 3D Shapes, shared protocol randomness across branches, and two architectures. The experiments differ in the behavior they are designed to activate.

\subsection{Dataset and factors}

3D Shapes contains $480{,}000$ rendered scenes generated from six factors: floor color, wall color, object color, object size, object shape, and object orientation~\citep{burgess2018understanding}. The Cartesian product is fully enumerated. This structure gives us exact counterfactuals: to intervene on one factor, we change its index and hold the other five indices fixed.

We use binary prediction tasks derived from the factors. The base benchmark contains five tasks:
\begin{itemize}[leftmargin=1.4em,itemsep=2pt,topsep=3pt]
    \item \textbf{Object color:} a binary split of the object-color values.
    \item \textbf{Wall color:} a binary split of the wall-color values.
    \item \textbf{Object shape:} the two shapes that are common to all model-response supports.
    \item \textbf{Color--shape XOR:} the exclusive-or of binary object color and shape.
    \item \textbf{Context gate:} a color decision whose active factor depends on wall context.
\end{itemize}
For each task we train ResNet-18 and a small ViT with three random seeds. The resulting 30 models are crossed with all six factors, producing 180 model--factor units.

\subsection{Clean endpoint pairs}

For a factual scene $\mathbf{x}^{+}$ and target factor $k$, the counterfactual $\mathbf{x}^{-}$ changes only factor $k$. Binary factors are flipped. Multivalued color, size, and orientation factors use fixed involutive maps so that applying the map twice returns the original value. We use two independent maps for multivalued factors and average their measurements.

The clean endpoint effect is computed on 8,192 held-out factual--counterfactual pairs. The main threshold classifies a pair as active when the absolute target-score difference exceeds the experiment-specific $\varepsilon$. The endpoint classification is a property of the model and pair, not of the task label.

\subsection{Primary reveal protocol}
\label{app:cmmr}

The main controlled protocol is covariance-matched Gaussian reveal. Let $\boldsymbol\mu$ and $\widehat{\boldsymbol\Sigma}$ be the empirical mean and covariance of the image distribution. For $\alpha\in[0,1]$,
\begin{equation}
\mathbf{x}_{\alpha}
=
\boldsymbol\mu
+
\sqrt{\alpha}(\mathbf{x}-\boldsymbol\mu)
+
\sqrt{1-\alpha}\,\boldsymbol\eta,
\qquad
\boldsymbol\eta\sim\mathcal N(\mathbf{0},\widehat{\boldsymbol\Sigma}).
\label{eq:cmmr-path}
\end{equation}
The factual and counterfactual branch share the same $\boldsymbol\eta$. At $\alpha=1$ the path reaches the clean input; at $\alpha=0$ the state is independent of the individual sample under the fitted Gaussian reference. The covariance match makes the stimulus second-order neutral in whitened data coordinates. DECAF does not depend on this particular path; it only consumes the paired scores that the path produces.

The base benchmark uses 4,096 dynamic factual pairs, three noise seeds, two counterfactual maps, and 21 points along a trace-matched covariance family. The primary CMMR endpoint is accompanied by a trace-matched pixel-Gaussian endpoint. A diagonal covariance and a trace-normalized covariance-power path provide held-out geometries.

\subsection{Alternative protocols}

The covariance interpolation is
\[
\boldsymbol\Gamma_{\lambda}
=(1-\lambda)\widehat{\boldsymbol\Sigma}
+
\lambda\tau\mathbf I,
\qquad
\tau=\frac{\operatorname{tr}(\widehat{\boldsymbol\Sigma})}{D},
\]
where $D$ is the pixel dimension. The two endpoints use data covariance and trace-matched isotropic pixel covariance. The diagonal held-out path uses $\operatorname{diag}(\widehat{\boldsymbol\Sigma})$. The power path raises covariance eigenvalues to a power and renormalizes the trace. These paths test whether a semantic conclusion depends on one second-order geometry.

We use the protocol family as an audit, not as a learned worst-case score. The main paper reports CMMR-based DECAF and uses the alternatives to characterize transfer.

\subsection{Evidence validation through training trajectories}
\label{app:e-setup}

The evidence experiment trains shape classifiers in an environment where wall color is strongly correlated with the label. We use two training correlations, $0.95$ and $0.99$, two architectures, two seeds, and save checkpoints throughout training. This creates eight trajectories and 52 selected checkpoints.

The external target is shortcut-reversal vulnerability,
\[
V_{\mathrm{rev}}
=
\operatorname{Acc}(p_{\mathrm{test}}=0.95)
-
\operatorname{Acc}(p_{\mathrm{test}}=0.05).
\]
A shortcut-dominant checkpoint performs well when wall color remains aligned and fails when the correlation reverses. A shape-dominant checkpoint is stable. We compare $V_{\mathrm{rev}}$ with the evidence margin $E_{\mathrm{wall}}-E_{\mathrm{shape}}$.

\subsection{Fragility validation}
\label{app:f-setup}

The fragility experiment uses a binary object-shape task and treats floor color as the candidate endpoint-null factor. We train robust, neutral, and fragile variants for each architecture and three seeds, producing 18 models.

The neutral variant uses clean training only. The robust variant is encouraged to keep its output stable under floor-color counterfactuals at intermediate states. The fragile variant is encouraged to respond to floor color away from the endpoint while preserving the clean shape task. The intended comparison is restricted to models for which clean accuracy remains high and floor color remains endpoint-null.

The behavioral target is the prediction-change rate under held-out floor-color interventions at intermediate states. We also measure randomized-floor stability and transfer across covariance geometries.

\subsection{Contradiction validation}
\label{app:c-setup}

The contradiction benchmark uses two binary variables: object color $A$ and wall context $G$. The factual endpoint is evaluated in the context $G=1$, where all three tasks share the clean rule $Y=A$. Under the swapped context $G=0$:
\[
\text{Direct: }Y=A,
\qquad
\text{Gate: }Y=H,
\qquad
\text{Invert: }Y=1-A,
\]
where $H$ is object shape. Direct preserves the object-color effect. Gate removes it. Invert reverses it.

We train 30 models: three tasks, two architectures, and five seeds. Balanced validation accuracy is at least $0.9987$. A binary symmetric context channel swaps the wall context with probability $\eta\in[0,0.5]$. All intermediate images remain in the support of the generated dataset. Two independent wall-color involutions test map transfer.

The label-level outcomes are preserve, collapse, and swap. The key contrast is between Gate and Invert: both reduce aligned evidence, but only Invert should create endpoint-opposed mass.

\subsection{Statistical summaries}

The base benchmark uses stratified bootstrap over task, architecture, and seed. The evidence experiment bootstraps complete trajectories. The fragility and contradiction experiments report seed-level variation and architecture-stratified summaries. Bootstrap intervals use 500 repetitions unless stated otherwise. We treat the model or training trajectory as the main statistical unit rather than counting every image as independent evidence.

\FloatBarrier
\section{Selected Controlled Results}
\label{app:controlled-results}

This appendix expands the behavioral validation in Section~\ref{sec:controlled} without repeating its main tests. We report five positive extensions: a matched-magnitude example, the complete response atlas, all evidence-training trajectories, fragility checks across interventions and reveal geometries, and contradiction checks across regimes, architectures, seeds, and counterfactual maps.

\subsection{Matched-magnitude response roles}
\label{app:controlled-overview}

Figure~\ref{fig:controlled-overview} gives a concrete example of the scalar ambiguity discussed in the main text. In an object-shape model, the true shape factor and an endpoint-null floor-color factor produce nearly equal ordinary response magnitude, but their response roles differ sharply. The same distinction matters across the controlled benchmark: ordinary magnitude can rank an endpoint-null factor above a supported factor, whereas endpoint-oriented routing removes these false-null reversals.

\begin{figure*}[t]
    \centering
    \begin{subfigure}[t]{0.46\textwidth}
        \centering
        \includegraphics[
            height=3.75cm,
            width=\linewidth,
            keepaspectratio
        ]{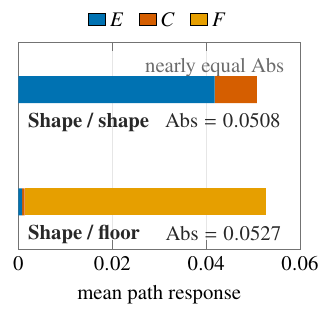}
        \caption{Nearly equal magnitude, different response roles.}
    \end{subfigure}
    \hspace{0.04\textwidth}
    \begin{subfigure}[t]{0.30\textwidth}
        \centering
        \includegraphics[
            height=3.35cm,
            width=\linewidth,
            keepaspectratio
        ]{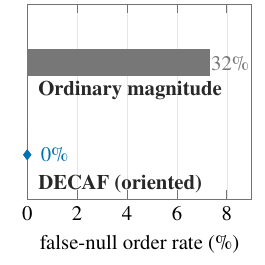}
        \caption{Orientation removes false-null reversals.}
    \end{subfigure}
    \caption{\textbf{Equal response magnitude can hide different behavior.}
    (a) The true shape factor and an endpoint-null floor-color factor have nearly identical ordinary magnitude, yet \decaf separates evidence from fragility.
    (b) Across the benchmark, ordinary magnitude ranks an endpoint-null factor above a supported factor in $7.32\%$ of comparable pairs, whereas endpoint-oriented routing reduces this rate to zero.}
    \label{fig:controlled-overview}
\end{figure*}

\subsection{Complete decomposition atlas}
\label{app:controlled-atlas}

Figure~\ref{fig:app-controlled-atlas} displays every task--factor combination. Endpoint-null factors concentrate near the fragility corner, direct color tasks concentrate in evidence, and interaction tasks carry more contradiction and mixed mass. Across 125 endpoint-null units, 96.8\% of ordinary response is fragility on average. Across 48 endpoint-supported units, the mean composition is 64.1\% evidence, 12.5\% contradiction, and 23.4\% null fragility.

\begin{figure*}[t]
    \centering
    \includegraphics[width=0.7\textwidth]{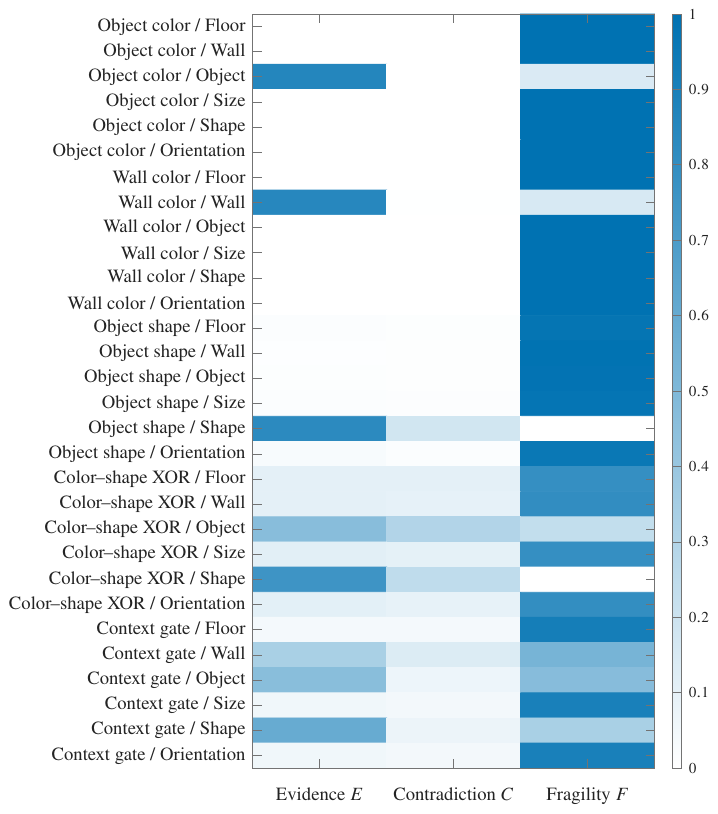}
    \caption{\textbf{Complete controlled decomposition atlas.}
    Results cover five tasks, two architectures, three seeds, and six factors. Endpoint-null factors are dominated by fragility, while task-relevant and interaction factors contain evidence and contradiction in different proportions.}
    \label{fig:app-controlled-atlas}
\end{figure*}

\subsection{Evidence across training trajectories}
\label{app:e-trajectories}

Figure~\ref{fig:app-e-heatmaps} places exact shortcut vulnerability and the \decaf evidence margin on the same eight checkpoint trajectories. Six trajectories exhibit nonconstant vulnerability, and all six show positive within-trajectory association between evidence margin and shortcut vulnerability. The two Small-ViT trajectories trained at the strongest shortcut correlation remain shortcut-dominant, and \decaf correspondingly keeps their evidence margin high.

\begin{figure*}[t]
    \centering
    \begin{subfigure}[t]{0.47\textwidth}
        \centering
        \includegraphics[
            height=3.05cm,
            width=\linewidth,
            keepaspectratio
        ]{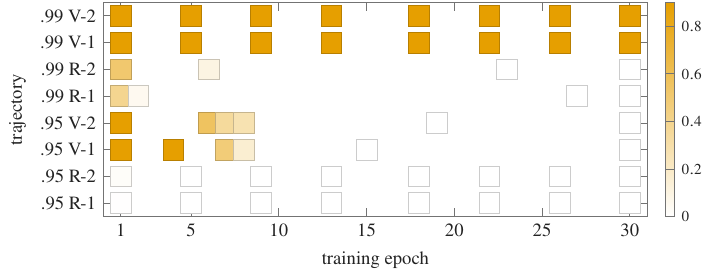}
        \caption{Exact shortcut vulnerability.}
    \end{subfigure}
    \hfill
    \begin{subfigure}[t]{0.47\textwidth}
        \centering
        \includegraphics[
            height=3.05cm,
            width=\linewidth,
            keepaspectratio
        ]{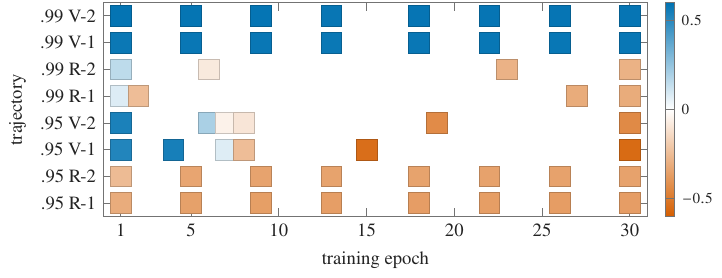}
        \caption{Evidence margin.}
    \end{subfigure}
    \caption{\textbf{Evidence follows shortcut reliance across complete training trajectories.}
    The two panels show the same eight checkpoint sequences through an external behavioral measure and the \decaf evidence margin.}
    \label{fig:app-e-heatmaps}
\end{figure*}

\subsection{Fragility: intervention separation and reveal geometry}
\label{app:f-extensions}

The main text validates $F$ against independently measured off-path prediction change. Figure~\ref{fig:app-f-extensions} provides three complementary checks. First, $F$ cleanly orders robust, neutral, and fragile interventions. Second, similar total response can correspond to different endpoint-relative roles across architectures. Third, the robust--neutral--fragile ordering transfers from the primary covariance-matched reveal to diagonal covariance, trace-matched pixel Gaussian, and three covariance-power geometries. The controlled fragility result is therefore not tied to one second-order reveal geometry, although the numerical summaries remain protocol-relative.

\begin{figure*}[t]
    \centering
    \begin{subfigure}[t]{0.27\textwidth}
        \centering
        \includegraphics[
            height=3.5cm,
            width=\linewidth,
            keepaspectratio
        ]{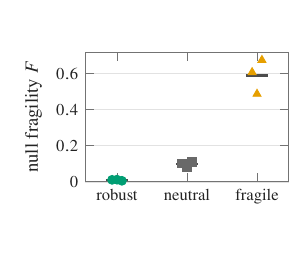}
        \caption{Intervention separation.}
    \end{subfigure}
    \hfill
    \begin{subfigure}[t]{0.27\textwidth}
        \centering
        \includegraphics[
            height=3.5cm,
            width=\linewidth,
            keepaspectratio
        ]{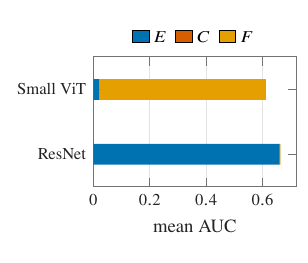}
        \caption{Similar magnitude, different roles.}
    \end{subfigure}
    \hfill
    \begin{subfigure}[t]{0.38\textwidth}
        \centering
        \includegraphics[
            height=3.0cm,
            width=\linewidth,
            keepaspectratio
        ]{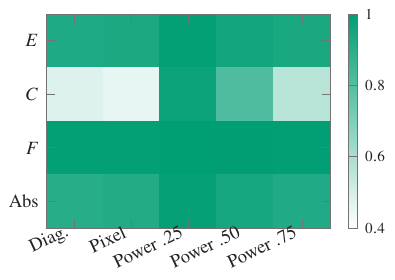}
        \caption{Cross-geometry rank transfer.}
    \end{subfigure}
    \caption{\textbf{Additional fragility checks.}
    (a) $F$ separates robust, neutral, and fragile training.
    (b) Similar ordinary response can be endpoint evidence in one model and endpoint-null fragility in another.
    (c) The controlled ordering transfers across held-out reveal geometries.}
    \label{fig:app-f-extensions}
\end{figure*}
\begin{figure*}[!t]
    \centering

    \begin{subfigure}[t]{0.285\textwidth}
        \centering
        \includegraphics[
            height=3cm,
            width=\linewidth,
            keepaspectratio
        ]{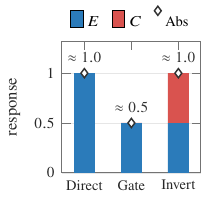}
        \caption{Attenuation vs.\ inversion.}
    \end{subfigure}
    \hfill
    \begin{subfigure}[t]{0.285\textwidth}
        \centering
        \includegraphics[
            height=3cm,
            width=\linewidth,
            keepaspectratio
        ]{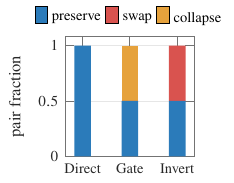}
        \caption{Independent label behavior.}
    \end{subfigure}
    \hfill
    \begin{subfigure}[t]{0.285\textwidth}
        \centering
        \includegraphics[
            height=2.7cm,
            width=\linewidth,
            keepaspectratio
        ]{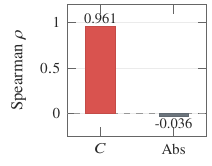}
        \caption{$C$ succeeds where $\Abs$ fails.}
    \end{subfigure}

    \vspace{5pt}

    \begin{subfigure}[t]{0.285\textwidth}
        \centering
        \includegraphics[
            height=3cm,
            width=\linewidth,
            keepaspectratio
        ]{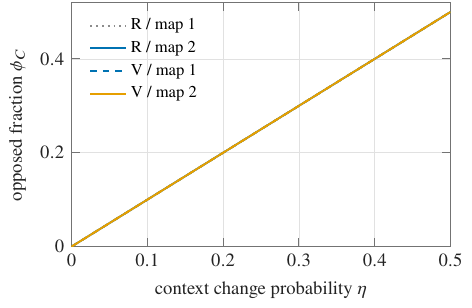}
        \caption{Opposed-fraction calibration.}
    \end{subfigure}
    \hfill
    \begin{subfigure}[t]{0.285\textwidth}
        \centering
        \includegraphics[
            height=3cm,
            width=\linewidth,
            keepaspectratio
        ]{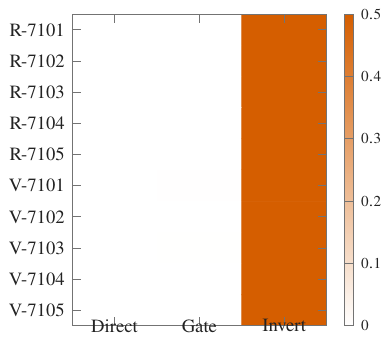}
        \caption{Architecture and seed consistency.}
    \end{subfigure}
    \hfill
    \begin{subfigure}[t]{0.285\textwidth}
        \centering
        \includegraphics[
            height=3cm,
            width=\linewidth,
            keepaspectratio
        ]{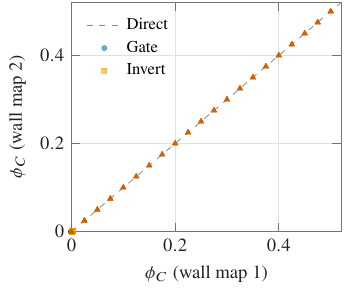}
        \caption{Counterfactual-map transfer.}
    \end{subfigure}

    \caption{\textbf{Additional contradiction checks.}
    Top: contradiction separates attenuation from inversion, agrees with independently measured label behavior, and succeeds where ordinary magnitude does not.
    Bottom: the opposed fraction is calibrated and stable across architectures, seeds, and alternative wall-color counterfactual maps.}
    \label{fig:app-c-extensions}
\end{figure*}

\subsection{Contradiction: regime separation and calibration}
\label{app:c-extensions}

The main text shows that $C$ tracks independently measured label reversal. Figure~\ref{fig:app-c-extensions} expands this result. Direct preserves the effect, Gate suppresses it, and Invert reverses it; only inversion creates substantial opposed mass. The induced label behavior independently separates into preservation, collapse, and swap. Across architectures, seeds, and wall-color maps, the opposed fraction remains well calibrated. Raw sign changes can be frequent when the stage effect is numerically close to zero, but the corresponding opposed mass remains small, separating magnitude-aware contradiction from sign counting.

\FloatBarrier
\section{ImageNet-9 Experimental Details}
\label{app:imagenet9-setup}

This appendix gives the construction and secondary diagnostics behind Section~\ref{sec:imagenet9}. The main text keeps only the tests needed for the central argument. Here we specify the data splits, model zoo, paired background interventions, independent behavioral indicators, magnitude-controlled evaluation, spatial-attribution baselines, and cross-architecture comparison.

\subsection{Data variants and disjoint splits}

We use the official ImageNet-9 Backgrounds Challenge variants~\citep{xiao2021noise}: Original, Mixed-Same, Mixed-Rand, Mixed-Next, Only-FG, and the challenge backgrounds. The variants preserve foreground identity while changing how the background is constructed. We match variants by foreground identifier and retain 4,050 foregrounds for which the required variants are available.

A deterministic split keeps the roles of the data separate. We reserve 1,644 paired foregrounds for the broad model-level response scan and an 820-foreground deep pool for the sample-level benchmark; 768 foregrounds from this pool are used for the expensive baseline comparison. The remaining foregrounds are not used in that sample-level benchmark. No foreground appears in more than one split.

\subsection{Model zoo and common score space}

The 72-model zoo contains two complementary groups. The first consists of 24 off-the-shelf ImageNet-1k classifiers from torchvision and timm. These models were trained independently of our audit and span ResNet, ResNeXt, WideResNet, DenseNet, EfficientNet, RegNet, ConvNeXt, DeiT, ViT, BEiT, Swin, MaxViT, and CoAtNet architectures.

The second group contains 48 models trained directly on ImageNet-9. We fine-tune six backbones---ResNet-50, ConvNeXt-Tiny, EfficientNet-B3, RegNetY-8GF, Swin-T, and ViT-B/16---on four versions of the training data: Original, Mixed-Rand, Mixed-Next, and Only-FG, using two random seeds. This $6\times4\times2$ design broadens the range of learned background dependence while retaining the same ImageNet-9 prediction task.

We train the fine-tuned models for eight epochs with AdamW, cosine decay, one warm-up epoch, BF16 computation, random resized crops, and horizontal flips. ResNet-50, EfficientNet-B3, and RegNetY-8GF use batch size 256 and learning rate $3\times10^{-4}$. ConvNeXt-Tiny, Swin-T, and ViT-B/16 use batch size 128 and learning rate $10^{-4}$.

All models are evaluated in the same nine-class score space. For an ImageNet-1k model, we compute its 1,000-way softmax and sum the probabilities belonging to each official ImageNet-9 superclass. We do not sum logits or apply a second softmax. The ImageNet-9 fine-tuned models output nine logits directly. In both groups, the scalar analyzed by \decaf is the probability assigned to the true ImageNet-9 superclass.

\subsection{Paired background interventions and reveal paths}

For each foreground, we construct two paired background interventions. Same--Rand compares Mixed-Same with Mixed-Rand: the foreground is unchanged, while a same-class background is replaced by a random-class background. This pair is used for the evidence and fragility analyses. Same--Next compares Mixed-Same with Mixed-Next, where the replacement background comes from superclass $(y+1)\bmod 9$. This pair creates a directional background change for the contradiction analysis.

The primary reveal begins from a shared blurred midpoint. For endpoints $\mathbf x^{+}$ and $\mathbf x^{-}$,
\begin{equation}
    \mathbf x^{0}
    =
    \operatorname{Blur}\!\left(
        \frac{\mathbf x^{+}+\mathbf x^{-}}{2}
    \right),
\end{equation}
and each branch is linearly revealed from $\mathbf x^{0}$ to its endpoint. We evaluate nine stages,
$t\in\{0,0.125,\ldots,1\}$. The main endpoint threshold is $\varepsilon=0.02$ in true-class probability.

To test whether conclusions depend on the reveal path, we also use nested-patch reveal on an $8\times8$ grid. Patches are ordered by endpoint-difference energy, with a fixed random tie-break. Two independently tie-broken orders are evaluated. Both branches use the same patch order and the same neutral starting image. Appendix~\ref{app:imagenet9-results} reports the corresponding path-order and threshold checks.

\subsection{Behavioral indicators defined without \decaf}
\label{app:imagenet9-labels}

The sample-level benchmark defines three observable behavioral indicators directly from model predictions. None uses $E$, $C$, or $F$, and the indicators are allowed to overlap.

\paragraph{Evidence behavior.}
Set $Y_E=1$ when Mixed-Same is classified correctly and replacing its background with Mixed-Rand either changes the predicted class or lowers the true-class probability by at least $0.20$. This records a consequential change caused by replacing the background.

\paragraph{Contradiction behavior.}
Set $Y_C=1$ when changing only the background makes the model switch from the foreground class to the class associated with the new Mixed-Next background. This is an independently observed directional reversal in the model's prediction.

\paragraph{Fragility behavior.}
We first restrict attention to Same--Rand pairs whose fully revealed endpoints produce little difference, $|d|<0.02$. The endpoint pair alone cannot tell us whether such a case is otherwise sensitive, so we test it separately with additional background changes that are not part of the \decaf reveal. These changes use Gaussian blur, Gaussian noise, pixelation, color shift, and patch shuffle, each at two severities. Set $Y_F=1$ if any of them changes the predicted class or shifts the true-class probability by at least $0.20$.

These indicators are external behavioral checks, not definitions of the \decaf components. Their purpose is to ask whether the response roles obtained from the paired trajectory agree with behavior measured independently of that trajectory decomposition.

\subsection{Magnitude-controlled evaluation}

The deep benchmark uses 32 models and 768 foregrounds, giving 24,576 model--image units. We use two complementary ways to control ordinary response magnitude.

First, we divide the units into 20 quantile bins of $\Abs$. Within a bin, a behavioral AUROC is computed only when both positive and negative examples are present for that indicator. The main text reports both the valid-bin summary and the stricter common-support comparison in which all three indicators are represented.

Second, we construct one-to-one matched comparisons between cases with different behavioral indicator patterns but nearly identical ordinary response magnitudes. We require the relative difference in $\Abs$ to be at most $5\%$, yielding 8,289 matched comparisons.

Because $Y_E$, $Y_C$, and $Y_F$ may overlap, the matched evaluation does not force every case into one ground-truth class. For a case with at least one active indicator, we examine the largest score among the three response-role coordinates. The case receives full credit when the largest coordinate corresponds to an active behavioral indicator. If several coordinates tie for the maximum, credit is divided across the tied coordinates. The reported matched-pair accuracy averages this role-agreement score over the matched cases. Ordinary magnitude has only one scalar value and therefore cannot prefer evidence, contradiction, or fragility.

The implementation also records a separate pairwise-ranking diagnostic: when an independent behavioral indicator differs across a matched pair, it asks whether the corresponding response-role score changes in the same direction. This ranking diagnostic is distinct from the role-agreement accuracy reported in Section~6.2.

\subsection{Spatial-attribution baselines}

The deep benchmark includes six standard spatial-attribution methods: Input$\times$Gradient, Integrated Gradients with 16 steps, SmoothGrad with 16 noise samples, BlurIG with 12 blur levels, a $7\times7$ Occlusion grid, and RISE with 256 masks. For each method, we sum absolute attribution inside the foreground--background difference region. These scores describe where the prediction is sensitive; they are not used to define the three behavioral indicators above.

\subsection{Cross-architecture comparison and spatial complementarity}

\begin{figure}[!t]
    \centering
    \includegraphics[width=0.5\linewidth]{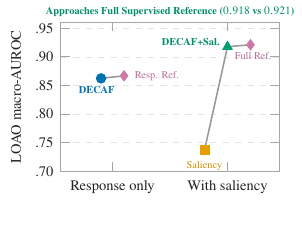}
    \caption{\textbf{Cross-architecture transfer and spatial complementarity.}
    All learned results hold out one architecture family during fitting.
    Response features from \decaf nearly match the supervised response-statistics reference, while adding spatial attribution provides a further gain and approaches the full supervised reference.}
    \label{fig:app-in9-cross-family}
\end{figure}

For the learned comparison, every representation is evaluated with the same supervised protocol: train on all but one architecture family and evaluate on the held-out family. The \decaf representation uses $(M,E,C,F)$. A response-statistics reference uses $(M,\Abs,\mathrm{Net},\mathrm{SignFlip},\mathrm{active\ rate})$. The spatial representation uses the six attribution scores above. We also evaluate \decaf combined with the spatial scores and a full reference combining response statistics with spatial attribution.

Figure~\ref{fig:app-in9-cross-family} contains the panel moved from the main text. Using response information alone, \decaf reaches leave-one-architecture-family-out macro-AUROC $0.862$, close to the supervised response-statistics reference at $0.867$. Spatial attribution alone reaches $0.737$. Combining \decaf with spatial attribution reaches $0.918$, close to the full supervised reference at $0.921$. This is best read as a complementarity check: the response decomposition and spatial attribution retain different information.

\subsection{Statistical reporting}

For direct behavioral comparisons, we report AUROC and AUPRC. For learned comparisons, we report leave-one-architecture-family-out results so that the evaluated architecture family is not used to fit the probe. Undefined AUROCs remain missing rather than being replaced by chance. Where bootstrap intervals are reported, models---rather than individual model--image rows---are the resampling unit.

\FloatBarrier
\section{ImageNet-9 Protocol Robustness Checks}
\label{app:imagenet9-results}

\begin{figure*}[!t]
\centering
\begin{subfigure}[t]{0.45\textwidth}
    \centering
    \includegraphics[width=\linewidth]{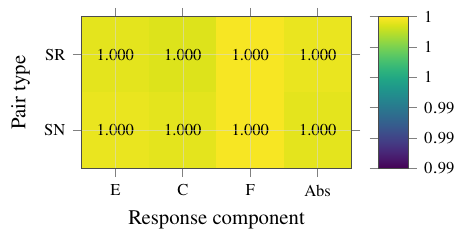}
    \caption{Patch-order stability.}
\end{subfigure}
\hspace{0.06\textwidth}
\begin{subfigure}[t]{0.43\textwidth}
    \centering
    \includegraphics[width=\linewidth]{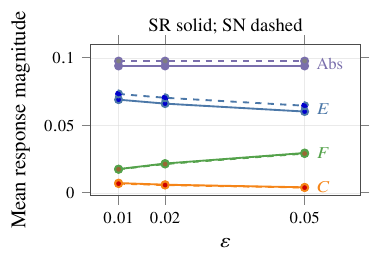}
    \caption{Endpoint-threshold sensitivity.}
\end{subfigure}
\caption{\textbf{Robustness of the ImageNet-9 protocol audit.}
(a) The two independently tie-broken nested-patch orders agree almost perfectly for every response component.
(b) Varying the endpoint threshold changes the allocation between active and endpoint-null response, but not the qualitative protocol conclusion.}
\label{fig:app-in9-sensitivity}
\end{figure*}

The protocol audit in Section~\ref{sec:imagenet9} changes the reveal path while keeping the factual--counterfactual endpoints fixed. We retain two checks that test whether the reported effect is caused by an arbitrary implementation choice.

First, the nested-patch reveal is evaluated with two independently tie-broken patch orders. Their response summaries agree almost perfectly, showing that the blend-to-patch contrast is not an accident of one patch sequence. Second, we recompute the decomposition at $\varepsilon\in\{0.01,0.02,0.05\}$. Changing the threshold reallocates response between the active and endpoint-null branches as expected, but preserves the qualitative conclusion that patch reveal increases total response without increasing evidence.

\FloatBarrier
\section{Forward-Only Attribution: Complete Results and Boundaries}
\label{app:forward-attribution}

This appendix expands the compact comparison in
Section~\ref{sec:forward-attribution}.
It records the strict common-support protocol, complete baseline results,
an official-protocol FunnyBirds reproduction check, the endpoint-versus-trajectory
ablation, full-scale ImageNet validation, measured compute, large-model scaling,
and the PartImageNet boundary case.

\subsection{Protocol and strict common support}

FunnyBirds and ImageNet-1k use the same three architectures: ResNet-50, VGG-16, and ViT-B/16. FunnyBirds evaluates semantic-part rankings against two held-out operators, Telea inpainting and background-texture replacement. ImageNet-1k IDSDS partitions every image into a fixed $4\times4$ grid and evaluates the ranking of 16 patch scores against in-domain single-deletion effects. We first filter to images classified correctly by each model. We then intersect image IDs across the methods included in the strict comparison. The resulting support contains 499, 497, and 488 FunnyBirds images, and 7,663, 7,189, and 8,285 ImageNet images, respectively.

For each image, the primary metric is Spearman correlation between feature attribution and the benchmark target. We average within each model and then macro-average equally across architectures. All intervals use 1,000 paired image-cluster bootstrap replicates. We never average the FunnyBirds and ImageNet columns into one leaderboard.

Endpoint $M_k=|d_k|$ is an endpoint-only reference. On ImageNet IDSDS, the endpoint audit gives $\max_k|d_k-g_k|=0$ and confirms that every stored endpoint effect uses the same deletion pair as the evaluation target. The direct target-derived $|g_k|$ and independently persisted DECAF $M_k$ differ only by a mean numerical discrepancy of $6.0\times10^{-6}$.

\subsection{Complete cross-dataset method comparison}
\label{app:forward-attribution-comparison}

\begin{table*}[t]
\centering
\caption{Complete strict-common-support attribution results. Entries are equal-architecture macro-average Spearman correlations with $95\%$ paired image-cluster bootstrap intervals. Endpoint $M$ is an endpoint-only reference. KernelSHAP-512 is a deletion-game reference.}
\label{tab:forward-attribution-complete}
\footnotesize
\setlength{\tabcolsep}{4.5pt}
\begin{tabular}{llcc}
\toprule
Method & Access & FunnyBirds $\rho$ [95\% CI] & ImageNet-1k IDSDS $\rho$ [95\% CI] \\
\midrule
\multicolumn{4}{l}{\emph{DECAF}} \\
\decaf-3 & Forward only & 0.372 [0.353, 0.392] & 0.359 [0.355, 0.364] \\
\decaf-5 & Forward only & 0.403 [0.385, 0.422] & 0.367 [0.362, 0.371] \\
\decaf-9 & Forward only & 0.406 [0.388, 0.425] & 0.379 [0.375, 0.383] \\
\addlinespace
\multicolumn{4}{l}{\emph{General-purpose attribution}} \\
Input$\times$Gradient & Backward & 0.019 [$-$0.002, 0.042] & 0.098 [0.094, 0.101] \\
IG-16 & Backward & 0.266 [0.246, 0.285] & 0.238 [0.235, 0.242] \\
IG-32 & Backward & 0.271 [0.251, 0.290] & 0.242 [0.238, 0.245] \\
IG-U-32 & Backward & 0.200 [0.178, 0.222] & 0.295 [0.292, 0.299] \\
DeepLIFT & Backward & 0.197 [0.176, 0.217] & 0.341 [0.338, 0.344] \\
GradientSHAP & Backward & 0.226 [0.204, 0.246] & 0.236 [0.233, 0.240] \\
SmoothGrad-16 & Backward & $-$0.045 [$-$0.065, $-$0.023] & $-$0.015 [$-$0.018, $-$0.011] \\
RISE-512 & Sampling & 0.302 [0.283, 0.321] & 0.179 [0.175, 0.183] \\
RISE-U-512 & Sampling & 0.294 [0.274, 0.315] & 0.111 [0.107, 0.115] \\
\addlinespace
\multicolumn{4}{l}{\emph{Endpoint-only reference}} \\
Endpoint $M$ & Endpoint only & 0.324 [0.303, 0.344] & 0.371 [0.366, 0.377] \\
\addlinespace
\multicolumn{4}{l}{\emph{Deletion-game reference}} \\
KernelSHAP-512 & Sampling & 0.299 [0.278, 0.319] & 0.447 [0.443, 0.451] \\
\bottomrule
\end{tabular}
\end{table*}

\paragraph{Published-benchmark sanity check.}
On the original full-scale IDSDS ResNet-50 setting, our IG and IG-U scores are 0.194 and 0.252, closely matching the published values of 0.196 and 0.255~\citep{hesse2024benchmarking}. This agreement provides an external check on our ImageNet implementation. Our FunnyBirds evaluation instead uses held-out intervention operators and strict common support, so it is not intended to reproduce the dataset's native part-based evaluation protocol~\citep{hesse2023funnybirds}. We therefore compare its baseline values only within our registered protocol.

The stronger IG baseline changes across datasets. IG-32 exceeds IG-U-32 on FunnyBirds, while IG-U-32 is stronger on IDSDS. \decaf-5 exceeds both variants on both datasets. KernelSHAP exhibits the opposite transfer pattern: it is strongest on IDSDS, whose target is the deletion game it queries, but is weaker than endpoint $M$ and DECAF under the held-out FunnyBirds operators.

\paragraph{Native-protocol reproduction check.}
The FunnyBirds column above evaluates attribution transfer to two held-out
intervention operators, rather than the dataset's native Single Deletion
protocol. We therefore ran a separate implementation check using the official
FunnyBirds setting: the released checkpoints, native semantic part removals,
and RISE with 6,000 masks, an $8\times8$ coarse grid, and mask probability
$p=0.1$~\citep{hesse2023funnybirds}. As shown in
Table~\ref{tab:funnybirds-native-reproduction}, the reproduced RISE scores
differ from the published values by less than $0.005$ for every architecture.
This check supports the implementation fidelity of the RISE baseline used in
our benchmark; it does not imply that the held-out score of $0.302$ should
numerically match the native Single Deletion scores, because the two
evaluations use different intervention targets and RISE configurations.

\begin{table}[t]
\centering
\caption{
FunnyBirds native Single Deletion reproduction and native-target audit.
\textbf{Top:} the official RISE-6000 setting reproduces the published
Single Deletion scores within $0.005$ on all three architectures.
\textbf{Bottom:} raw Spearman correlation with the same native part-removal
target. Endpoint $M$ is included to show the strong target alignment of this
evaluation. These native-target results are a reproduction and boundary check,
not a replacement for the held-out FunnyBirds comparison in
Table~\ref{tab:forward-attribution-main}.
}
\label{tab:funnybirds-native-reproduction}
\small
\setlength{\tabcolsep}{4.5pt}

\begin{tabular}{lccc}
\toprule
Architecture & Published RISE SD & Reproduced RISE SD & $|\Delta|$ \\
\midrule
ResNet-50 & 0.560 & 0.558 & 0.002 \\
VGG16     & 0.730 & 0.726 & 0.004 \\
ViT-B/16  & 0.790 & 0.788 & 0.002 \\
\midrule
\multicolumn{4}{c}{\textit{Native-target raw Spearman $\rho$}} \\
\midrule
Method & ResNet-50 & VGG16 & ViT-B/16 \\
\midrule
Endpoint $M$ & 0.757 & 1.000 & 0.983 \\
\decaf-3     & 0.484 & 0.926 & 0.909 \\
\decaf-5     & 0.565 & 0.898 & 0.864 \\
\decaf-9     & 0.616 & 0.901 & 0.850 \\
RISE-6000    & 0.115 & 0.452 & 0.576 \\
\bottomrule
\end{tabular}
\end{table}

The native and held-out FunnyBirds experiments answer different questions.
The native target is defined by the same semantic part-removal contrast used
to construct the endpoint, whereas the main benchmark evaluates transfer to
held-out inpainting and texture-replacement interventions. The native audit
therefore verifies baseline implementation and exposes the target-aligned
boundary; the held-out benchmark remains the test of attribution transfer.

\subsection{Endpoint versus trajectory}
\label{app:endpoint-versus-trajectory}

\begin{table}[t]
\centering
\caption{Paired endpoint-versus-trajectory tests. Differences are DECAF minus endpoint $M$. Positive values indicate ordinary-attribution information beyond the endpoint.}
\label{tab:endpoint-trajectory-paired}
\footnotesize
\setlength{\tabcolsep}{4pt}
\begin{tabular}{llrrr}
\toprule
Dataset & Contrast & Mean $\Delta$ & 95\% CI & Model wins \\
\midrule
FunnyBirds & \decaf-3 $-M$ & +0.0488 & [0.0334, 0.0642] & 2/3 \\
FunnyBirds & \decaf-5 $-M$ & +0.0797 & [0.0637, 0.0962] & 2/3 \\
FunnyBirds & \decaf-9 $-M$ & +0.0828 & [0.0671, 0.0993] & 2/3 \\
\addlinespace
ImageNet-1k & \decaf-3 $-M$ & $-$0.0119 & [$-$0.0142, $-$0.0096] & 1/3 \\
ImageNet-1k & \decaf-5 $-M$ & $-$0.0045 & [$-$0.0072, $-$0.0017] & 1/3 \\
ImageNet-1k & \decaf-9 $-M$ & +0.0073 & [0.0045, 0.0101] & 2/3 \\
\bottomrule
\end{tabular}
\end{table}

FunnyBirds is the clean trajectory-value test because its evaluation operators differ from the explanation endpoint. All three DECAF grids significantly exceed $M$. Five stages capture $96.3\%$ of the nine-stage gain over the endpoint. IDSDS evaluates the same deletion contrast that defines $d_k$. Endpoint $M$ is therefore already highly aligned with its target. Three and five stages preserve most of that signal, while nine stages add a small but stable gain.

\begin{table*}[t]
\centering
\caption{Endpoint-versus-trajectory ablation by architecture. Best $\Delta$ is the largest DECAF-3/5/9 Spearman minus endpoint $M$.}
\label{tab:endpoint-trajectory-per-model}
\footnotesize
\setlength{\tabcolsep}{4.2pt}
\begin{tabular}{llrrrrr}
\toprule
Dataset & Model & Endpoint $M$ & \decaf-3 & \decaf-5 & \decaf-9 & Best $\Delta$ \\
\midrule
FunnyBirds & ResNet-50 & 0.388 & 0.425 & 0.447 & 0.451 & +0.064 \\
FunnyBirds & VGG-16 & 0.306 & 0.283 & 0.298 & 0.293 & $-$0.008 \\
FunnyBirds & ViT-B/16 & 0.277 & 0.409 & 0.465 & 0.475 & +0.197 \\
ImageNet-1k & ResNet-50 & 0.384 & 0.362 & 0.370 & 0.387 & +0.002 \\
ImageNet-1k & VGG-16 & 0.671 & 0.640 & 0.640 & 0.644 & $-$0.027 \\
ImageNet-1k & ViT-B/16 & 0.058 & 0.077 & 0.090 & 0.105 & +0.047 \\
\bottomrule
\end{tabular}
\end{table*}

The trajectory gain is architecture-dependent in the tested models. It is largest for ViT-B/16 on both datasets, modest for ResNet-50, and negative for VGG-16. This pattern is empirical rather than a general architecture law, but it shows that the value of intermediate responses is not uniform across model classes.

The separate native FunnyBirds audit in
Appendix~\ref{app:forward-attribution-comparison} provides the complementary
endpoint-aligned case: when the evaluation target is the exact native
part-removal contrast, endpoint $M$ reaches $\rho=0.913$ and exceeds all
trajectory summaries.

Together, the two FunnyBirds evaluations isolate the role of the trajectory:
the endpoint best recovers its own intervention effect, while intermediate
responses add value when attribution must transfer to a different intervention.

\subsection{Full-scale ImageNet validation}

\begin{table}[t]
\centering
\caption{Full 50,000-image ImageNet validation scale check. Values are equal-architecture macro-average IDSDS over 115,876 correctly classified model--image units.}
\label{tab:idsds-full50k}
\footnotesize
\setlength{\tabcolsep}{4pt}
\begin{tabular}{lrrrr}
\toprule
Method & Macro $\rho$ [95\% CI] & ResNet-50 & VGG-16 & ViT-B/16 \\
\midrule
Endpoint $M$ & 0.3708 [0.3685, 0.3731] & 0.3778 & 0.6731 & 0.0615 \\
\decaf-5 & 0.3633 [0.3613, 0.3653] & 0.3609 & 0.6395 & 0.0894 \\
IG-U-32 & 0.2942 [0.2926, 0.2958] & 0.2522 & 0.4939 & 0.1366 \\
IG-32 & 0.2397 [0.2380, 0.2411] & 0.1941 & 0.4002 & 0.1247 \\
\bottomrule
\end{tabular}
\end{table}

The 10,000-image conclusions are not a favorable-subset artifact. The full validation set preserves the ordering $M > \decaf\text{-}5 > \mathrm{IG\text{-}U\text{-}32} > \mathrm{IG\text{-}32}$. The full-scale paired difference between \decaf-5 and $M$ is $-0.0075$ with a $95\%$ interval of $[-0.0086,-0.0062]$.

\subsection{Measured compute and large-model scaling}

\begin{table*}[!tb]
\centering
\caption{Measured ImageNet-1k compute. Rows per image count model-forward input rows after batching. Timing is compute-only and macro-averaged across ResNet-50, VGG-16, and ViT-B/16.}
\label{tab:forward-attribution-compute}
\footnotesize
\setlength{\tabcolsep}{4.5pt}
\begin{tabular}{llrrrr}
\toprule
Method & Access & Rows/image & Backward? & ms/image $\downarrow$ & Peak GiB $\downarrow$ \\
\midrule
\decaf-3 & Forward only & 51 & No & 22.1 & 11.2 \\
\decaf-5 & Forward only & 85 & No & 36.7 & 11.2 \\
\decaf-9 & Forward only & 153 & No & 65.9 & 11.2 \\
Input$\times$Gradient & Backward & 1 & Yes & 1.4 & 2.3 \\
DeepLIFT & Backward & 2 & Yes & 3.1 & 6.6 \\
GradientSHAP & Backward & 16 & Yes & 14.4 & 25.9 \\
IG-32 & Backward & 32 & Yes & 28.7 & 50.4 \\
IG-U-32 & Backward & 32 & Yes & 28.7 & 50.4 \\
SmoothGrad-16 & Backward & 16 & Yes & 16.1 & 25.9 \\
RISE-512 & Sampling & 512 & No & 216.6 & 14.4 \\
KernelSHAP-512 & Sampling & 512 & No & 216.4 & 14.4 \\
\bottomrule
\end{tabular}
\end{table*}

\decaf-3 is the speed-oriented Pareto point. Relative to IG-32, it has higher quality on both datasets, is $1.30\times$ faster, and uses $4.50\times$ less peak memory. \decaf-5 trades additional latency for higher quality while keeping the same inference-scale memory. KernelSHAP-512 is $5.89\times$ slower than \decaf-5 and uses six times as many forward rows.

\subsection{PartImageNet as a task-aligned boundary case}

PartImageNet supplies semantic part masks and evaluates held-out part-removal effects. Direct part removal and coalition methods are therefore unusually aligned with the evaluation target. We retain the benchmark as a boundary case rather than using it to define the main general-purpose comparison.

\begin{table}[!tb]
\centering
\caption{PartImageNet strict-common-support Spearman. Part-removal and coalition methods exploit the supplied semantic part groups and are directly aligned with the held-out part-removal target.}
\label{tab:partimagenet-boundary}
\footnotesize
\setlength{\tabcolsep}{4pt}
\begin{tabular}{lr}
\toprule
Method & Spearman [95\% CI] \\
\midrule
Part-LIME-1000 & 0.478 [0.458, 0.497] \\
Part Occlusion & 0.477 [0.458, 0.496] \\
Exact Part-Shapley & 0.433 [0.412, 0.454] \\
KernelSHAP-512 & 0.426 [0.405, 0.447] \\
Endpoint $M$ & 0.364 [0.342, 0.385] \\
\decaf-9 & 0.363 [0.341, 0.384] \\
\decaf-5 & 0.358 [0.337, 0.379] \\
\decaf-3 & 0.350 [0.329, 0.370] \\
RISE-512 & 0.299 [0.276, 0.320] \\
IG-32 & 0.289 [0.270, 0.311] \\
\bottomrule
\end{tabular}
\end{table}

This result marks the method boundary clearly. When perfect semantic parts are already supplied and the target is itself a part-removal effect, direct part interventions can be stronger. Even in that setting, the DECAF trajectories remain above standard gradient-path attribution and preserve their forward-only memory advantage.

\FloatBarrier

\section{Covertype Experimental Details and Complete Results}
\label{app:covertype-transfer}

This appendix reports the complete Covertype audit behind Section~\ref{sec:controlled}. The experiment uses a natural tabular base and controlled context--factor channels. Its purpose is not to assume that every treatment succeeds. It measures which response mechanism each trained model actually realizes.

\subsection{Setup and Operational Outcomes}
\label{app:covertype-setup}

We balance classes 1 and 2 from Covertype and retain 240,000 examples. The 54 natural features are augmented by one binary context and one binary candidate factor. The direction module uses context $G$ and factor $Z$; the fragility module uses context $H$ and factor $U$. Natural features and data splits are shared across all mechanism variants.

\begin{table}[t]
\centering
\caption{Covertype benchmark design. Each model family uses three seeds. Direction experiments additionally use shortcut strengths $p\in\{0.75,0.95\}$.}
\label{tab:covertype-setup}
\small
\begin{tabular}{llllr}
\toprule
Module & Regimes & Held-out operational behavior & Families & Models \\
\midrule
Direction & Direct, Gate, Invert & preserve / collapse / invert & 5 & 90 \\
Fragility & Robust, Mild, Fragile & endpoint-null prediction change & 5 & 45 \\
\midrule
Total & & & & 135 \\
\bottomrule
\end{tabular}
\end{table}

For the direction module, the endpoint is $G=+1$. We query the factor effect under $G=+1$ and $G=-1$, then define preservation, collapse, and inversion from the held-out signed responses. For the fragility module, the endpoint is $H=+1$. The primary behavioral target used in Section~\ref{sec:controlled} is the prediction-change rate under $H=-1$ among examples satisfying the endpoint-null gate. This target was stored by the formal experiment and matches the definition of $F$.

\subsection{Complete Behavior Alignment}
\label{app:covertype-complete-ranks}

Table~\ref{tab:covertype-complete-ranks} reports the complete rank comparison. Evidence uses preservation, contradiction uses actual inversion, and fragility uses endpoint-null alternate-context prediction change. The first two columns use 90 direction models; the last uses 45 fragility models. Parenthetical sample counts mark methods with partial coverage.

\begin{table*}[t]
\centering
\caption{Spearman correlation with realized behavior. SHAP interaction is available for random forests and XGBoost; KernelSHAP and LIME use their fixed formal subsets. $F$ values use the endpoint-null behavior target.}
\label{tab:covertype-complete-ranks}
\footnotesize
\setlength{\tabcolsep}{5pt}
\begin{tabular}{lrrr}
\toprule
Method & Preservation $\rho$ & Actual inversion $\rho$ & Endpoint-null change $\rho$ \\
\midrule
DECAF component & \textbf{0.864} & \textbf{0.987} & \textbf{0.974} \\
Endpoint $M$ & 0.657 & $-0.001$ & 0.148 \\
Abs & 0.804 & $-0.207$ & 0.588 \\
Signed net & \textbf{0.878} & $-0.353$ & 0.043 \\
SignFlip & $-0.919$ (78) & 0.888 (78) & 0.842 (39) \\
OppMass & $-0.412$ & 0.974 & 0.965 \\
Native SHAP & 0.781 & $-0.205$ & 0.481 \\
SHAP interaction & $-0.251$ (54) & 0.093 (54) & 0.069 (27) \\
KernelSHAP & 0.811 (30) & $-0.219$ (30) & 0.441 (15) \\
Global PFI & 0.664 & $-0.029$ & 0.676 \\
Context-conditioned PFI & 0.298 & 0.311 & 0.739 \\
PDP/ALE interaction & $-0.100$ & 0.827 & 0.937 \\
LIME & 0.845 (30) & $-0.384$ (30) & 0.506 (15) \\
\bottomrule
\end{tabular}
\end{table*}

For the two headline semantic coordinates, joint family/seed cluster bootstrap gives
\[
\rho(C,V_{\mathrm{inv}})=0.9868,
\qquad 95\%\ \mathrm{CI}=[0.9725,0.9967],
\]
and
\[
\rho(F,V_{\mathrm{null}})=0.9741,
\qquad 95\%\ \mathrm{CI}=[0.9416,0.9883].
\]
Evidence reaches $\rho(E,V_{\mathrm{keep}})=0.8642$ with 95\% interval $[0.8318,0.8955]$.

\subsection{The Same Treatment Produces Different Realized Mechanisms}
\label{app:covertype-family-audit}

Table~\ref{tab:covertype-family-audit} shows why treatment labels cannot serve as mechanism ground truth. The Invert generator produces genuine reversal in random forests and multilayer perceptrons, but mostly collapse in the remaining families. The Fragile generator becomes endpoint evidence in logistic regression, while nonlinear families retain substantial endpoint-null mass.

\begin{table*}[!tb]
\centering
\caption{Model-family audit. Direction columns average the Invert regime across two strengths and three seeds. Fragility columns average the Fragile regime across three seeds. ``Active'' is the endpoint-active fraction; ``Null change'' is the alternate-context prediction-change rate on endpoint-null examples.}
\label{tab:covertype-family-audit}
\resizebox{\textwidth}{!}{%
\begin{tabular}{lrrr|rrrrr}
\toprule
& \multicolumn{3}{c|}{Invert treatment} & \multicolumn{5}{c}{Fragile treatment} \\
Model family & $C$ & Invert rate & Collapse rate & $M$ & $E$ & $F$ & Active & Null change \\
\midrule
HistGradientBoosting & 0.000 & 0.000 & 1.000 & 0.020 & 0.070 & 0.180 & 0.340 & 0.623 \\
Logistic regression & 0.000 & 0.000 & 1.000 & 0.299 & 0.299 & 0.0001 & 0.989 & 0.000 \\
MLP & 0.088 & 0.743 & 0.257 & 0.028 & 0.084 & 0.080 & 0.350 & 0.219 \\
Random forest & 0.024 & 0.816 & 0.169 & 0.057 & 0.170 & 0.062 & 0.673 & 0.327 \\
XGBoost & 0.000 & 0.002 & 0.998 & 0.020 & 0.091 & 0.163 & 0.346 & 0.552 \\
\bottomrule
\end{tabular}%
}
\end{table*}

Within the Invert treatment alone, $C$ still tracks the amount of realized inversion with $\rho=0.922$. Among families with nonconstant inversion behavior, the correlations are 0.997 for histogram gradient boosting, 0.992 for random forests, 0.971 for XGBoost, and 0.940 for the MLP. Logistic regression has zero inversion throughout, so its within-family correlation is undefined.

\subsection{Fixed Semantic Readout and Magnitude Conditioning}
\label{app:covertype-fixed-semantic}

The fixed semantic benchmark assigns $E$, $C$, and $F$ to Evidence, Contradiction, and Fragility without fitting a meta-classifier. Table~\ref{tab:covertype-fixed-semantic} reports both the unconditional result and the within-Abs-bin result. The latter controls response magnitude through 12 quantile bins.

\begin{table*}[!tb]
\centering
\caption{Mechanism identification with fixed semantics and after conditioning on ordinary magnitude. Learned references use mechanism labels and are not access-matched to fixed DECAF.}
\label{tab:covertype-fixed-semantic}
\footnotesize
\begin{tabular}{lrrrr}
\toprule
Method & \makecell[r]{Fixed\\macro-AUROC} & \makecell[r]{Fixed\\macro-AUPRC} & \makecell[r]{Within-Abs\\macro-AUROC} & \makecell[r]{Within-Abs\\macro-AUPRC} \\
\midrule
\makecell[l]{DECAF / M / Abs /\\ Net / OppMass / SignFlip} & \textbf{0.816} & \textbf{0.768} & \textbf{0.956} & \textbf{0.948} \\
Endpoint $M$ & 0.487 & 0.437 & 0.565 & 0.666 \\
Abs & 0.518 & 0.450 & 0.495 & 0.585 \\
Native SHAP & 0.523 & 0.437 & 0.461 & 0.570 \\
SHAP interaction & 0.540 & 0.440 & -- & -- \\
\makecell[l]{Context-conditioned\\PFI} & 0.556 & 0.545 & 0.468 & 0.634 \\
PDP/ALE interaction & 0.539 & 0.438 & 0.559 & 0.739 \\
\addlinespace
\makecell[l]{Strong tabular reference\\(supervised)} & 0.987 & 0.968 & 0.983 & 0.994 \\
\makecell[l]{DECAF probe\\(supervised)} & 0.902 & 0.859 & 0.742 & 0.779 \\
\makecell[l]{Combined empirical\\reference (supervised)} & 0.985 & 0.965 & 0.983 & 0.994 \\
\bottomrule
\end{tabular}
\end{table*}

The fixed DECAF mechanism accuracy is 0.587. The supervised references use leave-one-model-family-out calibration, mechanism labels, and up to 15 input summaries. They bound cross-family decodability but do not replace the label-free comparison.

\subsection{Measured Cost and SHAP-Interaction Audit}
\label{app:covertype-cost}

Table~\ref{tab:covertype-cost-full} reports the registered cumulative worker time. Relative cost divides worker-seconds per model by the DECAF value. Coverage differs for methods whose formal budget uses a model subset.

\begin{table*}[htbp]
\centering
\caption{Complete measured method cost. SHAP interaction uses 128 stratified examples per tree model, split into four 32-example shards.}
\label{tab:covertype-cost-full}
\footnotesize
\begin{tabular}{lrrrr}
\toprule
Method & Models & Worker-s/model & Relative to DECAF & Predicted rows \\
\midrule
DECAF / $M$ / Abs / Net /  & 135 & 1.42 & $1.0\times$ & 25,920,000 \\
 OppMass / SignFlip &  &  &  &  \\
PDP/ALE interaction & 135 & 2.22 & $1.57\times$ & 38,880,000 \\
LIME & 45 & 8.38 & $5.92\times$ & 11,796,480 \\
PFI / context PFI & 135 & 13.32 & $9.41\times$ & 207,360,000 \\
KernelSHAP & 45 & 45.30 & $31.99\times$ & 377,501,760 \\
Native SHAP & 135 & 387.65 & $273.8\times$ & -- \\
Retraining reference & 135 & 19.70 & $13.91\times$ & -- \\
SHAP interaction & 54 & 15097.63 & $10662\times$ & -- \\
\bottomrule
\end{tabular}
\end{table*}

The formal SHAP-interaction run completed 216 shards over 54 tree models. It consumed at least 811,875 CPU-seconds (225.5 CPU-hours), used up to 32 concurrent shards, and required 17.71 hours of elapsed stage time. Its inversion correlation was only $0.093$ with a 95\% interval spanning zero.

\subsection{Matched-Pair Audit and Its Limits}
\label{app:covertype-matched-limit}

The automatic pair search produced only 11 protocol pairs. They cover 29.3\% of primary units, contain no same-family pair, and have a median relative endpoint-magnitude difference of 0.870 (maximum 0.945), although their Abs difference is small. DECAF reaches 0.727 mechanism accuracy on these pairs, but this set does not support a headline claim that both Abs and $M$ are matched. We therefore use the much better populated within-Abs-bin analysis in Table~\ref{tab:covertype-fixed-semantic} and retain the 11-pair result only as an audit.

\FloatBarrier

\bibliography{references}
\bibliographystyle{iclr2027_conference}

\end{document}